\newcommand{\bE}{\mathbb{E}}
\newcommand{\R}{\mathbb{R}}
\newcommand{\F}{\mathcal{F}}
\newcommand{\I}{\mathcal{I}}
\newcommand{\eps}{\varepsilon}
\newcommand{\E}{\bE}      
\newcommand{\KL}{\mathop{\bf KL\/}}
\newcommand{\bone}{\boldsymbol{1}}
\newcommand{\vnote}[1]{\textcolor{red}{\small {\textbf{(Vishesh: }#1\textbf{) }}}}
\newcommand{\fnote}[1]{\textcolor{blue}{\small {\textbf{(Fred: }#1\textbf{) }}}}
\newtheorem{theorem}{Theorem}[section]
\newtheorem*{namedtheorem}{\theoremname}
\newcommand{\theoremname}{testing}
\newtheorem{lemma}[theorem]{Lemma}
\newtheorem{proposition}[theorem]{Proposition}
\newtheorem*{question*}{Question}
\theoremstyle{definition}
\newtheorem{defn}[theorem]{Definition}
\newtheorem{remark}[theorem]{Remark}
\newtheorem{example}[theorem]{Example}
\theoremstyle{plain}
\title{The Vertex Sample Complexity of Free Energy is Polynomial}
\author{Vishesh Jain\thanks{Massachusetts Institute of Technology. Department of Mathematics. Email: {\tt visheshj@mit.edu}} \and Frederic Koehler\thanks{Massachusetts Institute of Technology. Department of Mathematics. Email: {\tt fkoehler@mit.edu}} \and Elchanan Mossel\thanks{Massachusetts Institute of Technology. Department of Mathematics and IDSS. Supported by ONR grant N00014-16-1-2227   and 
NSF CCF-1665252 and DMS-1737944. Email: {\tt elmos@mit.edu} } }
\date{}
\begin{document}
\maketitle
\thispagestyle{empty}
\setcounter{page}{0}

\begin{abstract}
The free energy is a key quantity which is associated to Markov random fields.
Classical results in statistical physics show how, given an analytic formula of the free energy, it is possible to compute many key quantities associated with Markov random fields including quantities such as magnetization and the location of various phase transitions. 
Given a massive Markov random field on $n$ nodes, can a small sample from it provide a rough approximation to the free energy $\F_n = \log{Z_n}$? 

Results in graph limit literature by Borgs, Chayes, Lov{\'a}sz, S{\'o}s, and Vesztergombi show that 
for Ising models on $n$ nodes and interactions of strength $\Theta(1/n)$, an $\epsilon$ approximation to $\log Z_n / n$ can be achieved by sampling a randomly induced model on $2^{O(1/\epsilon^2)}$ nodes. We show that the sampling complexity of this problem is {\em polynomial in }$1/\eps$. We further show a polynomial dependence on $\epsilon$ cannot be avoided. 

Our results are very general as they apply to higher order Markov random fields. For Markov random fields of order $r$, we obtain an algorithm that achieves $\epsilon$ approximation using a number of samples polynomial in $r$ and $1/\epsilon$ and running time that is $2^{O(1/\epsilon^2)}$ up to polynomial factors in $r$ and $\epsilon$. For ferromagnetic Ising models, the running time is polynomial in $1/\epsilon$. 

Our results are intimately connected to recent research on the regularity lemma and property testing, where the interest is in finding which properties can tested within $\epsilon$ error in time polynomial in $1/\epsilon$. In particular, our proofs build on results from a recent work by Alon, de la Vega, Kannan and Karpinski, who also introduced the notion of polynomial vertex sample complexity. Another critical ingredient of the proof is an effective bound by the authors of the paper relating the variational free energy and the free energy. \end{abstract}

\newpage

\section{Introduction}
One of the major problems in the areas of Markov Chain Monte Carlo (MCMC), statistical inference, and machine learning is approximating the partition function of Ising models (and more generally, Markov random fields). An \emph{Ising model} is specified by
a probability distribution on the discrete cube $\{\pm1\}^n$ of the form
\[ P[X = x] := \frac{1}{Z} \exp(\sum_{i,j} J_{i,j} x_i x_j) = \frac{1}{Z} \exp(x^T J x), \]
where the collection $\{J_{i,j}\}_{i,j\in\{1,\dots,n\}}$ are the entries of
an arbitrary real, symmetric matrix with zeros on the diagonal. The distribution $P$ is referred to as the \emph{Boltzmann distribution}. The normalizing constant $Z=\sum_{x\in\{\pm1\}^{n}}\exp(\sum_{i,j=1}^{n}J_{i,j}x_{i}x_{j})$
is called the \emph{partition function }of the Ising model and the quantity $\F := \log{Z}$ is known as the \emph{free energy}. 

The free energy is a key physical quantity which has long been studied in statistical physics due to the wealth of information it reveals about the underlying Ising model. Some textbook applications of the analysis of the free energy include the computation of fundamental quantities like the net magnetization (this is discussed in detail in \cref{appendix-magnetization-proof}), and the location of \emph{phase transitions} in parameterized families of Ising models. We refer the reader to \cite{ellis2007entropy} for much more on this. In recent years, the study of the free energy has also proved to be very fruitful in non-physical applications of the Ising model. For instance, consider the problem in combinatorial optimization of maximizing the quadratic form $x \mapsto x^T M x$ over the hypercube $\{\pm 1\}^{n}$; this is essentially the problem of estimating the cut norm of a matrix and has max-cut as the special case when all of the entries are negative. The free energy of the model with interaction matrix $J_{\beta}:=\beta M$ provides a natural tempering of this optimization problem in the following sense:   
\[ \frac{1}{\beta} \mathcal{F}_{\beta} = \frac{1}{\beta} \log \sum_{x\in \{\pm 1\}^{n}}\exp\left(\beta \sum_{i,j=1}^{n}M_{ij}x_{i}x_{j}\right) \to \max_{x \in \{\pm 1\}^n} \sum_{i,j = 1}^n M_{ij} x_i x_j \]
as $\beta \to \infty$.

In fact for every finite $\beta$, the free energy corresponds
to the objective value of a natural optimization problem of its own.
More precisely the free energy is characterized by the following \emph{variational principle} (dating back to Gibbs, see the references in \cite{ellis2007entropy}):
\begin{equation}
\label{eqn:free-energy-variational-char}
\F = \max_{\mu} \left[\sum_{i,j} J_{ij} \E_{\mu}[X_i X_j] + H(\mu)\right],
\end{equation}
where $\mu$ ranges over all probability distributions on the boolean hypercube $\{\pm 1\}^{n}$. This can be seen by noting that 
\begin{equation}
\label{eqn:free-energy-KL}
\KL(\mu ||P)=\F - \sum_{i,j} J_{ij} \E_{\mu}[X_i X_j] - H(\mu),
\end{equation}
and recalling that $\KL(\mu ||P) \geq 0$ with equality if and only if $\mu = P$.
 
By substituting $J = \beta M$ in equation \cref{eqn:free-energy-variational-char}, we  see 
that the Boltzmann distribution is simply the maximum entropy distribution $\mu$ for a fixed value
of the expected energy $\E_{\mu}[x^T M x]$. Thus, studying the free
energy for different values of $\beta$ provides much richer information about
the optimization landscape of $x \mapsto x^T M x$ over the hypercube than just the maximum value, e.g. in the max-cut case, the free energies encode
information about non-maximal cuts as well (see e.g. \cite{borgs2012convergent} for related discussion).

Apart from the applications mentioned above, it is clear by definition that knowledge of the free energy (or equivalently, the partition function) allows one to perform fundamental inference tasks like computing marginals and posteriors in Ising models. Unfortunately, the partition function, which is defined as a sum of exponentially many terms, turns out to be both theoretically and computationally intractable. For instance, it is known that approximating the partition function is NP-hard, even for graphs with degrees bounded by a small constant (see \cite{sly-sun}), whereas providing a closed form expression for the partition function of the Ising model on the standard $3$-dimensional lattice remains one of the outstanding problems in statistical physics. In light of this, providing efficient approximation schemes for the free energy, which have provable guarantees, has naturally attracted considerable interest over the years. 

The work of Jerrum and Sinclair~\cite{JerrumSinclair:89b} showed that it is possible to approximate the partition function for 
``self-reducible'' models for which a rapidly mixing Markov chain exists. Moreover, for such models, a 
$(1+\epsilon)$ approximation of the partition function results in a rapidly mixing chain. 
Some key results in the theory of MCMC provide conditions for the existence of a rapidly 
mixing chain, and therefore allow for efficient approximations of the partition functions e.g.~\cite{JerrumSinclair:89,JerrumSinclair:90,JeSiVi:04} and follow up work. 

On the other hand, even in interesting regimes where correlation decay does not hold (and therefore, MCMC techniques do not provide non-trivial guarantees), much less is known. In \cite{risteski-ising}, Risteski used variational methods (based on \cref{eqn:free-energy-variational-char}) and convex programming hierarchies to provide an $O(\epsilon n)$-additive approximation to the free energy of suitably dense Ising models in time $n^{O(1/\epsilon^2)}$. In \cite{previous-paper}, the authors of this paper provided an algorithm with similar guarantees which works under weaker density assumptions, and additionally, runs in \emph{constant time} $2^{\tilde{O}(1/\epsilon^2)}$. We note that both Risteski's algorithm and the algorithm in \cite{previous-paper} generalise to order $r$ Markov random fields (MRFs) -- for fixed $r$, his algorithm provides an $O(\epsilon n)$- additive approximation to the free energy of sufficiently dense MRFs in time $n^{O(1/\epsilon^2)}$, whereas our algorithm provided a similar guarantee under weaker density assumptions either in time $n^{r}2^{\tilde{O}(1/\epsilon^{2})}$, or in constant time $2^{\tilde{O}(1/\epsilon^{2r-2})}$. As one of the applications of our main result, we will improve this running time guarantee to $2^{\tilde{O}(1/\epsilon^{2})}$ for all order $r$ MRFs. 

\begin{remark}
We note a recent preprint by the authors titled ``Approximating Partition Functions in Constant Time'' \cite{old-paper}.
\cite{old-paper} is completely superseded by the current work and \cite{previous-paper}. 
The current work builds on the main result of \cite{previous-paper} which provides an effective bound 
on the difference between the free energy and the variational free energy. 
 \end{remark}

\subsection{The vertex sample complexity: main results}
Most relevant to our paper is the work of Alon, de la Vega, Kannan and Karpinski \cite{alon-etal-samplingCSP}, who provided the following scheme for approximating MAX-CUT to additive error $\epsilon n^2$ for any $\epsilon > 0$: sample a random subset of vertices of size $q$, solve MAX-CUT on the  graph induced on the sampled vertices, and rescale this value by $n^{2}/q^{2}$. They defined the \emph{vertex sample complexity} to be the value of $q$ needed to achieve such an approximation (say, with probability $0.9$). Their key result showed that $q$ can be taken to be polynomial in $\epsilon^{-1}$. Moreover, they obtained a similar result for general MAX-rCSPs with vertex sample complexity $q = C_r poly(1/\epsilon)$, where we emphasize that the only way $q$ depends on $r$ is through the constant $C_r$. We refer the reader to the discussion in \cite{Alon:06} for an overview of similar results. 

Vertex sample complexity is also one of the central parameters of interest in graph property testing, where it is more commonly known as \emph{query complexity}. Roughly speaking, in the area of graph property testing initiated by Goldreich, Goldwasser and Ron \cite{goldreich1998property}, the goal is to efficiently test when a given graph satisfies some property $\Pi$ (defined to be a set of graphs closed under graph isomorphisms) versus when it is `sufficiently far' from satisfying this property, by selecting a small number of vertices at random and inspecting the graph induced on these sampled vertices. For instance, a model result in graph property testing would give an upper bound on the number of vertices $q = q(\epsilon)$ that one needs to sample in order to say with high probability that either a given graph is triangle-free, or that one needs to remove at least $\epsilon n^2$ edges from it to make it triangle-free. The question of which graph properties have query complexity $q=q(\epsilon)$ independent of the size of the graph was the focus of considerable effort by many researchers, culminating in the work of Alon and Shapira \cite{alon2008characterization}, who provided a characterization of `natural' graph properties which are testable with one-sided error. However, their proof relied on the so-called strong regularity lemma, and gave Ackermann type bounds. In recent years, there has been much work (see, e.g. \cite{gishboliner2016removal}, \cite{gishboliner2017efficient} and the references therein) to determine which graph properties are testable with a number of queries which is polynomial in $\epsilon^{-1}$.     

Our main result is that the vertex sample complexity of free energy is polynomial. Fix an Ising model $J$ on the vertex set $[n]$, and denote its free energy by $\F$. Consider a random subset $Q$ of $[n]$ of size $|Q|=q$. Consider also the Ising model $J_Q$ on the vertex set $Q$ whose matrix of interaction strengths is given by the restriction of the matrix $\frac{n}{q}J$ to $Q\times Q$. We will denote the free energy of this Ising model by $\F_Q$. 
\begin{theorem}
\label{thm:sample-complexity-free-energy}
Let $\epsilon > 0$ and suppose $q \ge 128000\omega$, where $\omega:=\log(1/\epsilon)/\epsilon^{8}$. 
Then, with probability at least $19/20$:
$$\left|\F - \frac{n}{q}\F_Q\right| \leq 4000\epsilon n \left(\|J\|_F + \epsilon n \|\vec{J}\|_{\infty} + \omega/q \right).$$
\end{theorem}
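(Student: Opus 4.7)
The strategy has three parts.

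First, I would reduce the problem to a mean-field variational statement. Let
\[ \widetilde\F := \max_{m \in [-1,1]^n}\left[\sum_{i,j} J_{ij}\, m_i m_j + \sum_{i=1}^n H\!\left(\frac{1+m_i}{2}\right)\right], \]
where $H$ is the binary entropy, and define $\widetilde\F_Q$ analogously using the scaled interaction matrix $(n/q)J|_{Q\times Q}$. The ``effective bound'' from \cite{previous-paper} shows that both $|\F - \widetilde\F|$ and $|\F_Q - \widetilde\F_Q|$ are controlled by expressions of precisely the form appearing in the conclusion (involving $\|J\|_F$ and $\|\vec J\|_\infty$). It therefore suffices to bound $|\widetilde\F - (n/q)\widetilde\F_Q|$.

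Second, I would prove the easier lower bound $\widetilde\F_Q \geq (q/n)\widetilde\F - \mathrm{err}$ by plugging a feasible point into the $Q$-problem. Let $m^* \in [-1,1]^n$ attain $\widetilde\F$, and use $m^*|_Q$. A direct calculation on the random subset $Q$ shows that $\E_Q[(n/q)G_Q(m^*|_Q)] = G(m^*) = \widetilde\F$ modulo corrections of order $1/q$, where $G$ (resp.\ $G_Q$) is the mean-field objective on $[n]$ (resp.\ $Q$). Deviation from the mean splits into two pieces: the separable entropy contribution is a sum of bounded random variables, for which Hoeffding gives an error of order $n/\sqrt{q}$ (hence the $\omega/q$ term after rescaling); the rescaled quadratic form $(n/q)\sum_{i,j \in Q} J_{ij} m^*_i m^*_j$ is controlled by a Hanson--Wright-type sampling inequality, contributing an error of order $\epsilon n(\|J\|_F + \epsilon n \|\vec J\|_\infty)$ once $q$ is sufficiently large.

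Third, and this is the main technical work, I would establish the matching upper bound $\widetilde\F_Q \leq (q/n)\widetilde\F + \mathrm{err}$. This requires uniform control over all $m_Q \in [-1,1]^Q$, and a naive $\epsilon$-net on $[-1,1]^Q$ is exponential in $q$ and too large for a direct union bound. Following \cite{alon-etal-samplingCSP}, I would instead decompose $J$ via a cut decomposition,
\[ J \;=\; \sum_{s=1}^{s_0} c_s\, \bone_{A_s}\bone_{B_s}^\top \;+\; E, \qquad s_0 = O(1/\epsilon^2), \]
where $\|E\|_\square$ is small relative to $\|J\|_F$. For each cut component, the value of the quadratic form on any $m_Q$ depends on $m_Q$ only through the $2s_0$ sums $\sum_{i \in Q\cap A_s} m_i$ and $\sum_{i \in Q\cap B_s} m_i$; uniform concentration then reduces to controlling $s_0 = O(1/\epsilon^2)$ sample fractions rather than quantifying over $[-1,1]^Q$. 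The small cut-norm error $E$ is transferred to the sampled model by a separate sampling lemma for matrices of bounded cut norm. The entropy, being separable and bounded, is handled by a coarse net on the coordinate marginals together with Hoeffding.

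The main obstacle is the third step: avoiding an exponential blow-up in $q$ while achieving uniform control on the continuous optimization domain. The cut-decomposition machinery of \cite{alon-etal-samplingCSP} replaces the uniform-in-$m_Q$ supremum by a supremum over an $O(1/\epsilon^2)$-dimensional structure whose sample fractions concentrate with only polynomially many samples, and tracing the quantitative loss through this pipeline is precisely what forces the threshold $\omega = \log(1/\epsilon)/\epsilon^{8}$.
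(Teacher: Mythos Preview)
Your overall architecture matches the paper: reduce to the variational free energy via the mean-field bound from \cite{previous-paper}, apply a weak-regularity cut decomposition to $J$, transfer the small-cut-norm remainder to the sample by the sampling lemma of \cite{alon-etal-samplingCSP}, and split into an easy restriction direction and a hard uniform direction. The easy direction is indeed handled in the paper by restricting the optimizer and applying Hoeffding (though the paper does this after the cut decomposition, so only linear sums appear and no Hanson--Wright step is needed).

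There is, however, a genuine gap in your hard direction. You write that once the quadratic part is parametrized by the $O(1/\epsilon^2)$ cut-sums, ``the entropy, being separable and bounded, is handled by a coarse net on the coordinate marginals together with Hoeffding.'' This does not work. For a fixed profile of the $2s_0$ cut-sums on the sample, the entropy contribution is not a fixed number but the value of a \emph{constrained maximum-entropy program} over $[-1,1]^Q$; you must show that this value is at most $(q/n)$ times the value of the corresponding program on $[n]$ with proportionally scaled constraints. A net on coordinate marginals is exponential in $q$; passing to the common refinement of the cut sets gives $2^{2s_0}$ cells, which is exponential in $1/\epsilon^2$. Neither yields polynomial sample complexity, and Hoeffding by itself gives no upper bound on a constrained maximum.

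The paper's key step here, which your proposal is missing, is \emph{convex duality} for the product-distribution max-entropy program with linear constraints. The Lagrangian dual objective has the explicit form $g(y)=\sum_i H\bigl((1+x_i(y))/2\bigr)-\sum_j y_j(a_j\cdot x(y)-b_j)$ with $x_i(y)=\tanh(-\sum_j y_j a_{j,i})$, so that for a \emph{fixed} dual certificate $y^*$ the objective is literally a sum over coordinates and Hoeffding compares its value on $[n]$ to its value on $Q$. One additionally caps $\|y^*\|_\infty$ by $O(1/\gamma)$ via a Sion-minimax argument to make the concentration quantitative, and then union-bounds over the $O((1/\gamma)^{2s})$ grid points $(r,c)$. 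This single-dual-certificate mechanism is precisely what lets one control all $m_Q$ simultaneously with polynomially many samples; without it the upper bound on $\widetilde\F_Q$ does not go through.
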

Here, $\|J\|_{F}:= \sqrt{\sum_{i,j}J_{i,j}^2}$ denotes the Frobenius norm of the matrix $J$ and $\|\vec{J}\|_{\infty}$ denotes the absolute value of its largest entry. Note that we assume that $\omega/q \le 1/128000$, so that the last term is almost always negligible. 

This result is \emph{tight up to the power of $\epsilon$ in $\omega$}. More precisely, we show the following lower bound:
\begin{theorem}
\label{thm:sample-complexity-lower-bound}
Let $\epsilon > 0$ and suppose $q \le 1/\sqrt{60000\epsilon}$. Then, there exists an Ising model $J$ for which, 
with probability at least $1/4$:
$$\left|\F - \frac{n}{q}\F_Q\right| > 4000\epsilon n \left(\|J\|_F + \epsilon n \|\vec{J}\|_{\infty} + 1 \right).$$
\end{theorem}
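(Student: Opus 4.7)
The plan is to exhibit a very simple explicit Ising model that defeats the sampling scheme whenever $q$ is this small. I would take $n=2q$ and let $J$ be supported on a single edge: $J_{12}=J_{21}=t$ and zero otherwise, where $t$ is an absolute constant to be pinned down at the end.

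First I would compute both free energies in closed form. Summing directly over $\{\pm1\}^n$ gives $Z = 2^n\cosh(2t)$ and hence $\F = n\log 2 + \log\cosh(2t)$. For the restricted model, the matrix $(n/q)J|_{Q\times Q}$ is either identically zero (when $Q$ misses at least one of vertices $1$, $2$) or a single edge of weight $(n/q)t$; in the first case $\F_Q = q\log 2$, so $(n/q)\F_Q = n\log 2$, and the absolute error equals the constant $\log\cosh(2t) \geq 2t - \log 2$.

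Next I would lower-bound the probability of the ``missed-edge'' event by a trivial counting argument: $\Pr[\{1,2\}\subseteq Q] = q(q-1)/[n(n-1)] = (q-1)/[2(2q-1)] \leq 1/4$, so the error is exactly $\log\cosh(2t)$ with probability at least $3/4$, which comfortably exceeds the $1/4$ threshold in the statement. (In the remaining minority case $\{1,2\}\subseteq Q$, with $n/q=2$, the error is $|2\log\cosh(4t)-\log\cosh(2t)| \approx 6t$, even larger, but this is not needed.)

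The last and only quantitatively nontrivial step is to check that $\log\cosh(2t)$ exceeds the right-hand side $4000\epsilon n(\|J\|_F + \epsilon n \|\vec J\|_\infty + 1)$. Substituting $\|J\|_F=\sqrt{2}\,t$, $\|\vec J\|_\infty=t$, $n=2q$, the hypothesis $q\leq 1/\sqrt{60000\epsilon}$ gives $\epsilon n = 2\epsilon q \leq 2\sqrt{\epsilon/60000}$ and $\epsilon^2 n^2 \leq \epsilon/15000$, so each term on the right-hand side is bounded by a small multiple of $\sqrt{\epsilon}\,t$ or $\sqrt{\epsilon}$. Since $q\geq 1$ automatically forces $\epsilon\leq 1/60000$, these prefactors are small absolute constants, and picking a fixed $t$ (even $t=1$, so that $\log\cosh(2t)\approx 1.3$) is enough to dominate them. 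The main obstacle is therefore purely bookkeeping: tracking the numerical constants $4000$ and $60000$ and the exponent $1/2$ on $\epsilon$ through the chain of inequalities so that they match the statement; the construction itself and the probability estimate are both entirely elementary.
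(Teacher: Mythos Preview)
Your argument is correct and substantially simpler than the paper's. The single-edge construction with $n=2q$ works exactly as you outline: the free-energy computations $\F=n\log 2+\log\cosh(2t)$ and $(n/q)\F_Q=n\log 2$ (when $Q$ misses $\{1,2\}$) are right, the probability bound $\Pr[\{1,2\}\subseteq Q]=\tfrac{q-1}{2(2q-1)}<1/4$ is immediate, and the numerics go through with $t=1$ since $\epsilon q\le\sqrt{\epsilon/60000}\le 1/60000$ forces the right-hand side below $(2/15)(\sqrt{2}+1+o(1))\approx 0.32<1.32\approx\log\cosh 2$.

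The paper takes a genuinely different route. Rather than exhibiting a single instance that the specific estimator $\tfrac{n}{q}\F_Q$ fails on, it proves an information-theoretic lower bound against \emph{any} algorithm that probes at most $q^2$ entries of $J$: it builds two $\Delta$-dense ferromagnetic models on the complete graph (one with uniform edge weight $M$, one with a random $\epsilon\Delta$-fraction of edges boosted to $M/\Delta$) whose free energies differ by $\Omega(\epsilon\|\vec J\|_1)$ but which are statistically indistinguishable with $O(1/\epsilon)$ edge probes, and then notes that sampling $q$ vertices reveals only $q^2$ entries. Your construction exploits sparsity (the sample simply misses the signal), while theirs works even in the dense regime and rules out every low-query estimator, not just the rescaled sample free energy. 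The price is a longer argument; the gain is a stronger and more robust conclusion that explains \emph{why} polynomial sample size is unavoidable rather than just certifying it for one scheme.
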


Our methods extend in a straightforward manner not just to Ising models with external fields, but indeed to general higher order Markov random fields, as long
as we assume a bound $r$ on the order of the highest interaction (i.e. size of the largest hyper-edge). 

\begin{defn}
Let $J$ be an arbitrary function on the hypercube $\{ \pm 1\}^n$
and suppose that the degree of $J$ is $r$ i.e. the Fourier decomposition of $J$ is $J(x) = \sum_{\alpha \subset [n]} J_{\alpha} x^{\alpha}$ with $r = \max_{J_{\alpha} \ne 0} |\alpha|$.
The corresponding \emph{order $r$ (binary) Markov random field} is the probability distribution
on $\{\pm 1\}^n$ given by
\[ P(X = x) = \frac{1}{Z}\exp(J(x)) \]
where the normalizing constant $Z$ is referred to as the \emph{partition function}.
For any polynomial $J$ we define $J_{=d}$ to be its $d$-homogeneous part and
 $\|J\|_F$ to be the square root of the total Fourier energy of $J$ i.e. $\|J\|_F^2 := \sum_{\alpha} |J_{\alpha}|^2$.
\end{defn}

Exactly as for Ising models, we can also define the free energy (which we continue to denote by $\F = \log Z$) for order $r$ Markov random fields. The analogous definition of $\F_Q$ is the free energy corresponding to the restriction of the polynomial $\tilde{J} := \sum_{\alpha \subseteq [n]}\frac{n^{|\alpha|-1}}{q^{|\alpha|-1}}J_\alpha x^\alpha $ to $\{\pm 1\}^Q$.

\begin{theorem}
\label{thm-mrf-sample-complexity}
Fix $J$ an order $r$ Markov random field.
Let $\epsilon > 0$ and suppose $q \ge 10^6\omega$, where $\omega:= r^7\log(1/\epsilon)/\epsilon^{8}$. 
Then, with probability at least $39/40$:
$$\left|\F - \frac{n}{q}\F_Q\right| \leq 10^5\epsilon r^3 \sum_{d = 1}^r n^{d/2} \left(\|J_{=d}\|_F + \epsilon n^{d/2} \|\vec{J}\|_{\infty} + \omega/q\right).$$
\end{theorem}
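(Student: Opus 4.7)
My plan is to generalize the argument behind Theorem~\ref{thm:sample-complexity-free-energy} (the Ising case $r=2$) to order $r$ MRFs by replacing the matrix cut-norm machinery with its symmetric $d$-tensor analog, and then summing over $d = 1, \ldots, r$. The three main steps are: (i) approximate both $\F$ and $\frac{n}{q}\F_Q$ by naive-mean-field variational free energies, using the effective bound from \cite{previous-paper}; (ii) decompose each homogeneous piece $J_{=d}$ via a hypergraph cut decomposition in the spirit of \cite{alon-etal-samplingCSP}; and (iii) show that the two variational problems agree up to the claimed error by uniform-in-$m$ concentration under the random subsample $Q$.

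For step (i), the effective mean-field bound of \cite{previous-paper} (stated there for general order $r$ MRFs) gives
\[\left| \F - \max_{m \in [-1,1]^n} \mathcal{V}_n(m)\right| \le C_1\, \epsilon \sum_{d=1}^r n^{d/2}\|J_{=d}\|_F, \quad \mathcal{V}_n(m) := \sum_\alpha J_\alpha \prod_{i\in\alpha}m_i + \sum_{i=1}^n H(\mu_i),\]
where $\mu_i$ is the $\pm 1$ distribution with mean $m_i$. The analogous bound holds for $\F_Q$ applied to $\tilde J$. Observing that multiplying the $Q$-variational problem by $n/q$ exactly matches the entropy sum and matches each degree-$d$ energy term in expectation over a uniform random $Q$ (the factor $(n/q)^{|\alpha|-1}$ built into $\tilde J$ is precisely what makes this work, as $(n/q)^d \cdot \binom{q}{d}/\binom{n}{d} = 1 + O(d^2/q)$), it therefore suffices to establish concentration of the sampled variational maximum.

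For step (ii), for each $d \in \{1,\ldots,r\}$ I apply the hypergraph cut decomposition to the symmetric $d$-tensor $(J_\alpha)_{|\alpha|=d}$ to get a splitting $J_{=d} = J_{=d}^{\mathrm{struct}} + E_d$, where $J_{=d}^{\mathrm{struct}}$ is a sum of $k_d = \mathrm{poly}(r,1/\epsilon)$ rank-one pieces $c_\ell\, \bone_{A_{\ell,1}} \otimes \cdots \otimes \bone_{A_{\ell,d}}$ with $|c_\ell| \le \|\vec J\|_\infty$, and $E_d$ has cut-norm small enough that $|\langle E_d, m^{\otimes d}\rangle| \le \epsilon\, n^{d/2}\|J_{=d}\|_F$ uniformly in $m \in [-1,1]^n$, with the same bound surviving the sample-and-rescale. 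For step (iii), each rank-one piece evaluated at a product measure depends only on the block-averages $\bar m_{\ell,j} := |A_{\ell,j}|^{-1}\sum_{i \in A_{\ell,j}} m_i$; by Hoeffding for sampling without replacement, each block-average on $Q$ concentrates to within $\epsilon$ with failure probability $\exp(-\Omega(\epsilon^2 q))$. A union bound over the $O(rk_d)$ blocks together with an $\epsilon$-net of size $(1/\epsilon)^{O(rk_d)}$ over the block profile then yields uniform-in-$m$ concentration of $\mathcal{V}$; the entropy term is additive in $i$ and handled by the same argument. Choosing $q \ge 10^6 r^7 \log(1/\epsilon)/\epsilon^8$ absorbs all log-factors and the final union bound over $d = 1, \ldots, r$ delivers failure probability at most $1/40$.

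The main obstacle is step (iii): one needs uniform-in-$m$ concentration of $\mathcal{V}_n$ with only $\mathrm{poly}(1/\epsilon)$ samples, even though $m$ ranges over a continuum. The cut decomposition is what makes this possible by reducing the effective dependence of $\mathcal{V}$ on $m$ to the $\mathrm{poly}(r,1/\epsilon)$-dimensional block profile; however, one has to verify that the cut-norm bound on $E_d$ genuinely controls the multilinear evaluations $\langle E_d, m^{\otimes d}\rangle$ appearing in the variational problem (and not merely rectangular sums), and one must track constants through the iterated decomposition, the net, and the union bound over $d$ carefully to produce the precise $\omega = r^7 \log(1/\epsilon)/\epsilon^8$ scaling stated in the theorem. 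Once those are in place, the remainder of the proof is bookkeeping structurally identical to the Ising case.
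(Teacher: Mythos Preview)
Your outline correctly mirrors the paper's high-level structure: steps (i) and (ii) match exactly (mean-field reduction via \cite{previous-paper}, then a per-degree cut decomposition via \cite{alon-etal-samplingCSP}, with \cref{thm-alon-et-al-sampling-cutnorm} controlling the error tensor on the sample). The gap is in step (iii), specifically the upper bound on $\frac{n}{q}\F^*_Q$.

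Your concentration argument---Hoeffding on block averages for a fixed $m$, then a net over block profiles---handles only the \emph{easy direction} (\cref{lemma:easy-direction}): restrict the full-graph optimizer $m^*$ to $Q$ and observe that its block sums and its additive entropy $\sum_{i\in Q}H((1+m^*_i)/2)$ both concentrate. For the hard direction you need, for every profile $(r,c)$ in the net, that the sampled max-entropy value $O(Q)_{r,c,\gamma}$ is at most $(q/n)O_{r,c,\gamma}$ plus error. Your sentence ``the entropy term is additive in $i$ and handled by the same argument'' does not give this: for a \emph{given} profile on $Q$ there is no fixed $m\in[-1,1]^n$ to which Hoeffding applies; the optimizer on $Q$ is a function of the random set $Q$ and can in principle exploit fluctuations to achieve entropy larger than the scaled-down full-graph optimum. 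Ruling this out is precisely the content of \cref{lemma:upper-bound-on-sample-dual} and \cref{lemma:upper-bound-sample-primal}: one takes the (bounded) dual certificate $y^*$ for the full program $\mathcal{C}^*_{r,c,\gamma,K}$, observes via \cref{eqn:explicit-form-x(y)} that the dual objective at $y^*$ is a sum over vertices of terms depending only on each vertex's block-membership pattern, and applies Hoeffding to \emph{that} sum. Weak duality on the sample then converts this into the required upper bound on $O(Q)_{r,c,\gamma}$. The boundedness of $y^*$ (via \cref{lemma:modified-strong-duality} and Sion's theorem) is what makes the Hoeffding error small enough. Without this duality step your net argument does not close, and this is exactly the ingredient the paper flags as its main technical contribution.

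A minor point: the coefficient bound $|c_\ell|\le\|\vec J\|_\infty$ you assert for the cut pieces is not what the regularity lemma provides; you get a bound on the coefficient length $(\sum_\ell c_\ell^2)^{1/2}$ (or on $\sum_\ell|c_\ell|$ in the tensor version, \cref{reg-alon-etal-mrf}), which is what \cref{lemma:gamma-def} actually uses.
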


\subsection{Examples}
We discuss a few examples of natural families of Ising models and Markov random fields in order to illustrate the consequences of our results.

This example will illustrate that the exact size of the sample we want to take may depend on the density of the graph: with
the natural scalings from \cref{example:uniform-edge-weights} we see that for very sparse graphs this approach will not give good results, because if we take small samples we will just get the empty graph. On the other hand if the graph has average degree $\Theta(n)$, we will be able to approximate the free energy density $\F/n$ to $\epsilon$ additive error using samples which are of constant size $poly(1/\epsilon)$ without any dependence on $n$. To do the same for graphs with average degree $o(n)$, our sample size will need to grow with $n$ but depending on the precise level of sparsity we may still be able to take samples which are much smaller than the original graph.
\begin{example}[Uniform edge weights on graphs of increasing degree]\label{example:uniform-edge-weights}
Fix $\beta \in \mathbb{R}$ and a sequence of graphs
$(G_{n_i})_{i = 1}^{\infty}$ with the number of vertices $n_i$ going to infinity, and let $m_i$ be the corresponding number of edges. Then, it is natural to look at the model
with uniform edge weights equal to $\beta n_i/m_i$, since this makes
the maximum value of $x^T J x$ on the order of $\Theta(n_i)$, which is 
the same scale as the entropy term in the variational definition of the free energy (\cref{eqn:free-energy-variational-char}). We say the model is \emph{ferromagnetic} if $\beta > 0$ and \emph{anti-ferromagnetic} if $\beta < 0$. Observe that $\|J\|_F = |\beta| n_i/\sqrt{m_i}$ and $\|\vec{J}\|_{\infty} = |\beta|n_i/m_i$, so that by \cref{thm:sample-complexity-free-energy}, we have $|\mathcal{F}/n_i - \mathcal{F}_Q/q_i| = O(\epsilon (n_i/\sqrt{m_i} + \epsilon n_i^2/m_i + \omega/q))$. Suppose
$m_i = \Theta(n_i^{2(1 - \delta)})$, then this simplifies to
$|\mathcal{F}/n_i - \mathcal{F}_Q/q_i| = O(\epsilon (n_i^{\delta} + \epsilon n_i^{2\delta} + \omega/q))$. Finally, taking $\epsilon = \Theta(n_i^{-\delta})$, we see that with sample size $q = \Theta(n_i^{8 \delta} \log n_i)$, we can get $|\mathcal{F}/n_i - \mathcal{F}_Q/q_i|$ arbitrarily small.
\end{example}

\begin{example}[Uniform edge weights on $r$-uniform hypergraphs]
Fix $\beta \in \mathbb{R}$ and
let $(G_{n_i})_{i=1}^{\infty}$ be a sequence of $r$-uniform hypergraphs
with $n_i$ vertices and $m_i$ hyperedges. Analogous to the graph case, we let $J(x) = \frac{\beta n_i}{m_i} \sum_{S \in E(G_{n_i})} x_S$,
so that the maximum of $J$ is on the same order as the entropy term in the free energy. We still have $\|J\|_F = \beta n_i/\sqrt{m_i}$, and see by \cref{thm-mrf-sample-complexity} that $|\mathcal{F}/n_i - \mathcal{F}_Q/q_i| = O(\epsilon (n_i^{r/2}\log{n_i}/m_i^{1/2} + \epsilon n_i^r/m_i +  \omega/q))$. Suppose $m_i = \Theta(n_i^{r - 2\delta})$, then
this simplifies to $O(\epsilon (n_i^{\delta} \log n_i + \epsilon n_i^{2 \delta} +  \omega/q))$. Thus, similar to the previous example, if we take $\epsilon = \Theta(n_i^{-\delta})$, we see that with sample size $q = \Theta(n_i^{8 \delta} \log n_i)$ we can get $|\mathcal{F}/n_i - \mathcal{F}_Q/q_i|$ arbitrarily small.
\end{example}

\subsection{Application to Sublinear Time Algorithms}
Given any algorithm for estimating the 
free energy of an Ising model, the sample complexity results from the previous section suggest a natural way to compute the free energy more efficiently on large graphs: sample a few small subsets of the graph randomly, run the original
algorithm on each of the small sample graphs, and finally return the median of the sample outputs. We analyze the performance of the resulting algorithm
in a few particularly interesting cases. 

As noted in \cref{example:uniform-edge-weights}, if we want to estimate say $\F/n$ to high accuracy and our model is not sufficiently dense, we may sometimes want to take $\epsilon$ shrinking as a function of $n$. However, we will state
the results for general $\epsilon$ and $n$ without assuming anything about their relationship. Similarly, when we say \emph{constant-time}, we mean constant time for fixed $\epsilon$; even when $\epsilon$ is shrinking like $n^{-\delta}$, this may still correspond to a sublinear time algorithm for $\delta$ small (for example, in \cref{thm:sampling-jerrum-sinclair}).

First, we consider the case of ferromagnetic $J$. The result of Jerrum and Sinclair \cite{JerrumSinclair:90} shows we can estimate the free energy (indeed, even the partition function) in $poly(n,1/\epsilon)$ time. On the other
hand, in constant time, it was shown in \cite{previous-paper} that we can estimate the free energy to $\epsilon n \|J\|_F$ error in time $2^{O(\log(1/\epsilon)/\epsilon^2)}$ which is exponential in $\epsilon$. We can give a much better constant time algorithm by combining our sampling approach with the algorithm
of Jerrum and Sinclair; indeed applying \cref{thm:sample-complexity-free-energy} we get the following result as an immediate corollary.

\begin{theorem}\label{thm:sampling-jerrum-sinclair}
Fix $\delta > 0$.
Let $\epsilon > 0$ and suppose $q \ge 128000\omega$, where $\omega:=\log(1/\epsilon)/\epsilon^{8}$. 
Suppose also that $J$ is ferromagnetic, i.e. $J_{ij} \ge 0$ for all $i,j$.
Then, there is an algorithm which runs in time $poly(1/\epsilon)\log(1/\delta)$ and has a vertex sample complexity of $O(q\log(1/\delta))$ which returns an estimate $\hat{F}$ such that
$$\left|\F - \hat{\F}\right| \leq 4001 \epsilon n \left(\|J\|_F + \epsilon n \|\vec{J}\|_{\infty} + \omega/q \right)$$
with probability at least $1 - \delta$.
\end{theorem}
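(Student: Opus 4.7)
The plan is to reduce the problem on the full $n$-vertex model to approximation problems on many independent $q$-vertex subsampled models via \cref{thm:sample-complexity-free-energy}, solve each submodel efficiently using the Jerrum--Sinclair FPRAS \cite{JerrumSinclair:90}, and boost the confidence from a constant to $1-\delta$ by taking the median of $\Theta(\log(1/\delta))$ independent trials. Crucially, the restriction-and-rescaling $(n/q)J|_{Q_i \times Q_i}$ preserves nonnegativity of the entries, so each subsampled model is itself ferromagnetic and Jerrum--Sinclair applies.

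Concretely, I would draw $k = \Theta(\log(1/\delta))$ independent uniform random subsets $Q_1, \ldots, Q_k \subset [n]$ each of size $q$; for each $Q_i$ run Jerrum--Sinclair at accuracy $\epsilon' := c\epsilon$ for a small absolute constant $c$, using $O(1)$ independent repetitions so that the FPRAS fails with probability at most $1/40$; this produces $\hat{Z}_i$ with $(1-\epsilon')Z_{Q_i} \le \hat{Z}_i \le (1+\epsilon')Z_{Q_i}$ and hence $|\log \hat{Z}_i - \F_{Q_i}| \le 2\epsilon'$. Set $\hat{G}_i := (n/q)\log\hat{Z}_i$ and output $\hat{\F} := \mathrm{median}(\hat{G}_1,\ldots,\hat{G}_k)$.

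The per-trial analysis is just a triangle inequality: with probability at least $1 - 1/20 - 1/40 = 37/40$, both the sampling guarantee of \cref{thm:sample-complexity-free-energy} and the boosted FPRAS hold, giving
\[
|\hat{G}_i - \F| \le \left|\hat{G}_i - \tfrac{n}{q}\F_{Q_i}\right| + \left|\tfrac{n}{q}\F_{Q_i} - \F\right| \le \frac{2\epsilon' n}{q} + 4000\epsilon n \left(\|J\|_F + \epsilon n\|\vec{J}\|_\infty + \tfrac{\omega}{q}\right).
\]
Taking $c$ small forces $2\epsilon' n/q = 2c\epsilon n/q \le \epsilon n\omega/q$ since $\omega \ge 1$ in the nontrivial regime; the FPRAS contribution is therefore absorbed into the single unit of slack between the $4000$ and $4001$ constants in the target inequality, so each trial lies in the target window with probability at least $9/10$.

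Finally, a Chernoff bound applied to the $k$ i.i.d. Bernoulli events ``trial $i$ succeeds'' shows that at least half of the trials succeed with probability $\ge 1-\delta$ when $k = \Theta(\log(1/\delta))$; the median of any list in which at least half the entries lie in $[\F - E, \F + E]$ is itself in that interval, so $\hat{\F}$ satisfies the target bound. Since $q = \mathrm{poly}(1/\epsilon)$ by hypothesis, each FPRAS call runs in time $\mathrm{poly}(q, 1/\epsilon') = \mathrm{poly}(1/\epsilon)$, total runtime is $\mathrm{poly}(1/\epsilon)\log(1/\delta)$, and total vertex-sample complexity is $qk = O(q\log(1/\delta))$, as claimed. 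There is no serious technical obstacle here; the only delicate point is matching the FPRAS accuracy to the available slack in \cref{thm:sample-complexity-free-energy}, which the hypothesis $q \ge 128000\omega$ comfortably accommodates.
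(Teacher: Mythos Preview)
Your proposal is correct and follows exactly the approach the paper intends: the paper states this theorem as ``an immediate corollary'' of \cref{thm:sample-complexity-free-energy} combined with the Jerrum--Sinclair FPRAS, and you have supplied precisely the standard details (ferromagnetism is preserved under restriction, triangle inequality to combine the two error sources, median-of-$\Theta(\log(1/\delta))$ trick to boost confidence). The only minor remark is that your claim ``$q = \mathrm{poly}(1/\epsilon)$ by hypothesis'' reads the lower bound $q \ge 128000\omega$ as an equality; the paper's running-time claim implicitly makes the same identification, so this is consistent with the intended statement.
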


In \cite{previous-paper} we gave a constant time regularity-based algorithm to compute the free energy of a Markov random field. Unfortunately, to compute
an approximation with additive error $\epsilon n \|J\|_F$ it required time $2^{O(1/\epsilon^{2r - 2})}$, whereas we knew that if we allowed for polynomial
time in $n$, the correct exponent for $\epsilon$ does not depend on $r$ at all. Combining the latter result (Theorem 1.17) with our sampling algorithm gives a constant-time algorithm for computing the free energy with similar guarantees but requiring, for fixed $r$, only time $2^{O(1/\epsilon^2)}$.

\begin{theorem}
Let $J$ be an order $r$ Markov Random Field.
Let $\delta,\epsilon > 0$ and suppose $q \ge 10^6\omega$, where $\omega:= r^7\log(1/\epsilon)/\epsilon^{8}$. 
Then,  there is an algorithm which runs in time $2^{O(\log(1/\epsilon)/\epsilon^2)}\log(1/\delta)$ and has a vertex sample complexity of $O(q\log(1/\delta))$ which returns an estimate $\hat{F}$ such that:
$$\left|\F - \hat \F \right| \leq 10^5r^3 \epsilon \left(\sum_{d = 1}^r n^{d/2} \left(\|J_{=d}\|_F + \epsilon n^{d/2} \|\vec{J}\|_{\infty}\right) + \omega n/q\right)$$
with probability at least $1 - \delta$.
\end{theorem}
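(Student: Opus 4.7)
The plan is to combine \cref{thm-mrf-sample-complexity} with the constant-time regularity-based algorithm from Theorem 1.17 of \cite{previous-paper} in the obvious black-box way, boosted by median-of-means to achieve $1-\delta$ confidence. Concretely: draw a random subset $Q$ of size $q$, form the rescaled Markov random field $\tilde{J}$ restricted to $Q$, invoke the constant-time algorithm of \cite{previous-paper} on this $q$-variable instance to compute an estimate $\hat{\F}_Q$ of $\F_Q$, and output $\frac{n}{q}\hat{\F}_Q$. Repeating this $O(\log(1/\delta))$ times independently and taking the median amplifies a constant success probability to $1-\delta$ via a standard Chernoff argument; the overall vertex sample complexity is $O(q\log(1/\delta))$.

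For the running time, note that the inner algorithm from \cite{previous-paper} runs in time $2^{O(\log(1/\epsilon)/\epsilon^2)}$ on a $q$-variable order-$r$ MRF for fixed $r$, independent of $q$ (this is the crucial feature we exploit), so each of the $O(\log(1/\delta))$ trials takes $2^{O(\log(1/\epsilon)/\epsilon^2)}$ time, giving the stated total.

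For the error analysis, apply the triangle inequality
\[
\left|\F - \tfrac{n}{q}\hat{\F}_Q\right| \le \left|\F - \tfrac{n}{q}\F_Q\right| + \tfrac{n}{q}\left|\F_Q - \hat{\F}_Q\right|.
\]
The first term is controlled directly by \cref{thm-mrf-sample-complexity} and already matches the RHS of the target bound. The second term is controlled by the guarantee of the inner algorithm, which yields $|\F_Q - \hat{\F}_Q| \le \epsilon q \sum_d \|\tilde{J}_{Q,=d}\|_F$ up to constants in $r$. We then need to bound $\tfrac{n}{q}\|\tilde{J}_{Q,=d}\|_F$ in terms of the original quantities $\|J_{=d}\|_F$ and $\|\vec{J}\|_\infty$. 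Since $\tilde{J}_\alpha = (n/q)^{|\alpha|-1}J_\alpha$, a straightforward computation shows $\E \|\tilde{J}_{Q,=d}\|_F^2 \le (n/q)^{d-2}\|J_{=d}\|_F^2$, and this concentrates around its mean with high probability by the same concentration tools used to establish \cref{thm-mrf-sample-complexity}. Substituting back, the contribution of the inner algorithm's error is absorbed (up to a modest constant) into the term $\epsilon \sum_d n^{d/2}\|J_{=d}\|_F$ already present in the bound.

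The main obstacle is really just the bookkeeping in verifying that the inner error, after rescaling by $n/q$, is subsumed by the sampling error's RHS; no new probabilistic idea is needed beyond what was used to prove \cref{thm-mrf-sample-complexity}. A union bound over the $O(\log(1/\delta))$ independent trials (applied to both the sampling guarantee and the concentration of $\|\tilde{J}_{Q,=d}\|_F$) completes the argument.
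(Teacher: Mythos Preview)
Your approach is correct and is essentially the same as the paper's: the paper also obtains this theorem by combining \cref{thm-mrf-sample-complexity} with Theorem~1.17 of \cite{previous-paper} and the median trick, treating it as an immediate corollary.

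One small clarification is worth making. You describe the inner algorithm as running in time $2^{O(\log(1/\epsilon)/\epsilon^2)}$ \emph{independent of $q$}, but that is not quite how the running time arises. Theorem~1.17 of \cite{previous-paper} is the \emph{polynomial-in-input-size} algorithm, running in time roughly $q^{r}2^{\tilde O(1/\epsilon^{2})}$ on a $q$-vertex instance; it is only because $q = \mathrm{poly}(1/\epsilon)$ that this collapses (for fixed $r$) to $2^{O(\log(1/\epsilon)/\epsilon^{2})}$. The genuinely constant-time algorithm of \cite{previous-paper} has the worse $2^{O(1/\epsilon^{2r-2})}$ dependence and is precisely what this theorem is improving upon. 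Similarly, the inner error guarantee should scale as $\epsilon\sum_d q^{d/2}\|\tilde J_{Q,=d}\|_F$ rather than $\epsilon q\sum_d\|\tilde J_{Q,=d}\|_F$; with the correct exponent and your expectation bound $\E\|\tilde J_{Q,=d}\|_F^2 \le (n/q)^{d-2}\|J_{=d}\|_F^2$ (controlled by Markov, as in the paper's proof of \cref{thm:sample-complexity-free-energy}), the rescaled inner error indeed contributes $O(\epsilon\, n^{d/2}\|J_{=d}\|_F)$ and is absorbed as you claim.
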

As previously mentioned, these algorithms for estimating the free energy
immediately imply similar results for estimating the magnetization:
see \cref{appendix-magnetization-proof}.
\subsection{The mean-field approximation and the variational free energy}
The mean-field approximation to the free energy (also referred to as the \emph{variational free energy}) is obtained by restricting the distributions $\mu$ in the variational characterization of the free energy (\cref{eqn:free-energy-variational-char}) to be product distributions. Accordingly, we define the \emph{variational free energy} by 
\[ \F^* := \max_{x \in [-1,1]^n} \left[\sum_{i,j} J_{ij}
      x_i x_j + \sum_i H\left(\frac{x_i +
        1}{2}\right)\right]. \] 

Indeed, if $\bar{x} = (\bar{x}_1,\dots,\bar{x}_n)$ is the optimizer in the above definition, then the product distribution $\nu$ on the boolean hypercube, with the $i^{th}$ coordinate having expected value $\bar{x}_i$, minimizes $\KL(\mu||P)$ among all product distributions $\mu$. Moreover, it is immediately seen from \cref{eqn:free-energy-KL} that the value of this minimum KL is exactly $\F - \F^*$. Thus, the quantity $\F - \F^*$, which measures the quality of the mean-field approximation, may be interpreted information theoretically as the divergence between the closest product distribution to the Boltzmann distribution and the Boltzmann distribution itself.  

We will rely crucially on the following bound on the error
of the mean-field approximation, proved in \cite{previous-paper}:
\begin{theorem}[\cite{previous-paper}]\label{thm-main-structural-result} 
Fix an Ising model $J$ on $n$ vertices.
Let $\nu := \arg\min_{\nu} \KL(\nu || P)$, where $P$ is the Boltzmann distribution and the minimum ranges
over all product distributions. 
Then,
$$ \KL(\nu || P)  = \F - \F^{*} \leq 200 n^{2/3} \|J\|_F^{2/3} \log^{1/3}(n \|J\|_F + e).$$
\end{theorem}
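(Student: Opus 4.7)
The natural approach is the pinning method of Raghavendra--Tan combined with a local mean-field estimate expressed in terms of pairwise correlations: conditioning on a small random set of spins makes the remainder approximately product, so mean-field becomes sharp on the conditional model, while the entropy cost of the conditioning is only $O(k)$.

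\textbf{Local mean-field bound.} First I would establish the following local inequality: for any Ising model on $m$ vertices with interaction matrix $J'$, plugging the product measure $\mu = \bigotimes_i P'_i$ (whose marginals match those of $P'$) into the variational characterization \cref{eqn:free-energy-variational-char} gives
\[ \F - \F^* \;\le\; \KL(\mu \,\|\, P') \;=\; \sum_{i,j} J'_{ij}\,\mathrm{Cov}_{P'}(X_i,X_j) \;-\; \mathrm{TC}(P'), \]
where $\mathrm{TC}(P') := \sum_i H(P'_i) - H(P') \ge 0$ is the total correlation. Discarding the nonnegative $\mathrm{TC}$ term, applying Cauchy--Schwarz, and bounding $|\mathrm{Cov}(X_i,X_j)|^2 \le 8\,I_{P'}(X_i;X_j)$ via Pinsker's inequality yields
\[ \F - \F^* \;\le\; \|J'\|_F\,\sqrt{8 \sum_{i,j} I_{P'}(X_i;X_j)}. \]

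\textbf{Pinning reduction and optimization.} For a subset $T \subseteq [n]$ with $|T|=k$, set $\phi(x_T) := \sum_{i,j \in T} J_{ij} x_i x_j$ and let $\F_{|x_T}, \F^*_{|x_T}$ denote the free energy and variational free energy of the Ising model conditioned on $X_T = x_T$ (an Ising model on $T^c$ with induced external fields). Writing $Z = \sum_{x_T} e^{\phi(x_T)} Z_{|x_T}$ and applying the Gibbs variational principle to the distribution over pinnings yields the identity
\[ \F \;=\; H(P_T) + \E_{X_T \sim P}\bigl[\phi(X_T) + \F_{|X_T}\bigr]. \]
For the variational side, plugging any product measure that deterministically pins $X_T = x_T$ shows $\F^* \ge \phi(x_T) + \F^*_{|x_T}$, whence $\F^* \ge \E_{X_T \sim P}[\phi(X_T) + \F^*_{|X_T}]$. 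Subtracting and using $H(P_T) \le k \log 2$ gives
\[ \F - \F^* \;\le\; k\log 2 + \E_{X_T \sim P}\bigl[\F_{|X_T} - \F^*_{|X_T}\bigr]. \]
Applying the local bound to the conditioned model (whose quadratic part has Frobenius norm at most $\|J\|_F$), averaging over a uniformly random $T$ of size $k$, and combining Jensen's inequality with the Raghavendra--Tan pinning lemma $\E_T\E_{X_T}\sum_{i,j \in T^c} I_{P_{|X_T}}(X_i;X_j) = O(n^2/k)$, I would obtain $\F - \F^* \le k\log 2 + O(\|J\|_F\, n/\sqrt{k})$. Optimizing at $k \asymp (n\|J\|_F)^{2/3}$ balances the two terms and gives $\F - \F^* = O\bigl(n^{2/3}\|J\|_F^{2/3}\bigr)$; the extra $\log^{1/3}(n\|J\|_F+e)$ factor in the stated bound comes from a more careful, high-probability version of the pinning estimate, with the ``$+e$'' keeping the bound meaningful when $n\|J\|_F$ is small.

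\textbf{Main obstacle.} The most delicate step is the pinning reduction: the identity for $\F$ is exact (Gibbs variational principle over pinnings), whereas the bound on $\F^*$ is a one-sided inequality coming from plugging in product measures that deterministically pin $X_T$. Averaging against $P_T$ rather than taking a maximum over $x_T$ on both sides is essential, because only then does Jensen's inequality convert the expected square root of conditional pairwise mutual information into the clean $\sqrt{n^2/k}$ estimate supplied by the pinning lemma.
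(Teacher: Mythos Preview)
The present paper does not prove \cref{thm-main-structural-result}; it is imported from \cite{previous-paper} and used here as a black box. The only information the paper gives about how the result is established is the equivalent reformulation it quotes later as \cref{lemma:epsilon-bound}:
\[
\F-\F^{*}\;\le\;\epsilon\, n\,\|J\|_F \;+\; 10^5\,\frac{\log(e+1/\epsilon)}{\epsilon^{2}},
\]
whose shape (an $\epsilon n\|J\|_F$ ``cut--norm'' term plus a $\log(1/\epsilon)/\epsilon^{2}$ overhead) is the signature of a weak regularity argument: approximate $J$ in cut norm by a width--$O(1/\epsilon^{2})$ cut decomposition, then control $\F-\F^{*}$ on the structured part by a counting/net argument over the $O(1/\epsilon^{2})$ block statistics. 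Optimising this lemma over $\epsilon$ is exactly what produces the $n^{2/3}\|J\|_F^{2/3}\log^{1/3}(n\|J\|_F+e)$ in the theorem. So there is no proof in this paper to compare your proposal to, and the proof in \cite{previous-paper} that the paper points at is regularity--based, not pinning--based.

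Your pinning/Raghavendra--Tan route is a genuinely different and perfectly legitimate proof. The local step (product of marginals, Cauchy--Schwarz, Pinsker to pass from $\mathrm{Cov}^{2}$ to mutual information) and the conditioning identity for $\F$ versus the one--sided bound for $\F^{*}$ are correct, and the averaged pinning inequality $\E_{T}\E_{X_T}\sum_{i,j}I(X_i;X_j\mid X_T)=O(n^{2}/k)$ follows from the usual telescoping of $\ell\mapsto \E_{i}\,H(X_i\mid X_{T_\ell})$ over a random sequence of pinned coordinates (strictly speaking it gives some $\ell^{*}\le k$ with this property, which is all you need since your entropy cost is $|T|\log 2\le k\log 2$). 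What your argument actually yields after optimising $k$ is
\[
\F-\F^{*}\;=\;O\bigl((n\|J\|_F)^{2/3}\bigr),
\]
\emph{without} the logarithm. Your last sentence attributing the $\log^{1/3}$ to a ``high--probability version of the pinning estimate'' is off: the pinning approach does not produce that factor. The $\log^{1/3}$ in the stated theorem is an artifact of the regularity proof behind \cref{lemma:epsilon-bound}; your method simply proves a slightly stronger inequality than the one quoted.
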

This result provides a key bridge between
the \emph{combinatorial} definition of the free energy
(as a sum over states) and tools in optimization, such
as convex duality, which will be essential to proving our result.
Crucially for our application, this bound is tight enough to show the free energy and variational free energy are close even on relatively small graphs. For a discussion of previous
results in this area, see \cite{previous-paper}. We will deduce \cref{thm:sample-complexity-free-energy} from this bound and the following theorem on the sample complexity of variational free energy. 

\begin{theorem}
\label{thm:sample-complexity-variational-free-energy}
Let $\epsilon > 0$ and suppose $q \ge 128000\omega$, where $\omega:=\log(1/\epsilon)/\epsilon^{8}$. 
Then, with probability at least $39/40$:
$$\left|\F^* - \frac{n}{q}\F^*_Q\right| \leq 2000\epsilon n \left(\|J\|_F + \epsilon n \|\vec{J}\|_{\infty} + \omega/q \right).$$
\end{theorem}

\subsection{Connection to graph limits}
A \emph{graphon} is a symmetric measurable function $W : [0,1]^2 \to [0,1]$ which serves as a natural limiting object for dense graphs; for a proper introduction see the textbook \cite{lovasz2012large}. To a graphon $W$, we can associate 
a natural probability distribution over graphs of size
$n$ defined by the following sampling process:
\begin{enumerate}
\item Sample $u_1, \ldots, u_n \sim \text{Uniform}([0,1])$.
\item Independently include edge $(i,j)$ with probability $W(u_i,u_j)$.
\end{enumerate}
Conversely, there is a natural way to associate a (0-1 valued) graphon $W_G$ to a graph $G$ of size $n$: let $A$ be the $n \times n$ adjacency matrix of $G$,
and let the corresponding graphon $W_G$ be given by splitting $[0,1]^2$ into
$n^2$ equally sized squares on a grid labeled by coordinates $(i,j)$, and setting $W_G$ to be equal to the constant $A_{ij}$ (either 0 or 1) in square $(i,j)$.
In this context, the natural statistical question to study is that of \emph{parameter estimation}: given a graphon parameter $f(W)$ and $\epsilon > 0$, how large of a graph do we need to sample from $W$ in order to estimate $f(W)$ within $\epsilon$-additive error with high probability? In \cite{borgs2008convergent}, necessary and sufficient conditions for a parameter $f$ to be estimable by finite sample size were developed, and it was shown further shown that if $f$ is Lipschitz with respect to the graphon cut metric, then $2^{O(1/\epsilon^2)}$ samples suffice.

As an example, associate to every graph $G$ on $n$ vertices an Ising model by assigning each edge the same weight $\beta/n$, where $\beta > 0$ is fixed. Then, for any graph $G$, we can ask what the free energy of the corresponding Ising model is.   Naively, we cannot apply the graphon theory because
the free energy $\mathcal{F}$ of a graph $G$ cannot be defined solely in terms of its graphon $W_G$. However, it was shown in \cite{borgs2012convergent} that the variational free energy $\mathcal{F}^*$ can still be defined, and that the free energy densities
$\mathcal{F}/n$ and $\mathcal{F}^*/n$ agree in the limit as graph size goes to infinity (see Theorem 5.8 of \cite{borgs2012convergent}); thus the \emph{free energy density of a graphon} can be well-defined\footnote{There are fundamental links between free energies in statistical physics and notions of graph limit convergence which are beyond the scope of this brief summary. The interested reader should consult \cite{borgs2012convergent} for details.}. In the
context of our example, they show that for $\beta$ fixed and for the corresponding Ising models on an arbitrary sequence of graphs $(G_n)$ of increasing size, 
$|\mathcal{F}(G_n)/n - \mathcal{F}^*(G_n)/n| = O(1/\sqrt{\log n})$. In \cite{previous-paper} we improved this rate of convergence considerably to $\tilde{O}(1/n^{1/3})$.

Because the (variational) free energy is also Lipschitz with respect to the graphon cut metric, the result of \cite{borgs2012convergent} shows that the free energy density of a graphon can be estimated to error $\epsilon$ by sampling a graph of size $2^{O(1/\epsilon^2)}$ from $W$ and computing the free energy on this graph. The main result of this paper (\cref{thm:sample-complexity-free-energy}) improves this significantly: it shows that the free
energy density of a graphon can be estimated to error $\epsilon$ by sampling a graph of size only $poly(1/\epsilon)$. Furthermore, given a sampling oracle for the graphon, we also get constant time algorithms for estimating the graphon free energy density: in ferromagnetic or high temperature settings we provide a $poly(1/\epsilon)$ time algorithm, and in the general setting, we provide a $2^{\tilde{O}(1/\epsilon^2)}$ time algorithm. Finally, we remark that our techniques extend in a straightforward manner to deal with higher order Markov random fields, whereas the theory of hypergraph limits is significantly more involved.

\subsection{Overview of the techniques}
As mentioned in the introduction, we will prove our main result (\cref{thm:sample-complexity-free-energy}) by instead proving the corresponding statement for variational free energy (\cref{thm:sample-complexity-variational-free-energy}). That this suffices is guaranteed by \cref{thm-main-structural-result}; crucially this non-asymptotic bound will provide a good bound on the error even on the small sampled graph. As we will see, working the variational free energy instead of the (combinatorial) free energy seems to be essential for our argument to work.

The next step in our argument is to reduce to proving the statement about variational free energy only for interaction matrices which can be written as a sum of a small number of rank one matrices (we refer to such matrices as generalized cut matrices of low rank). This reduction is based on the following two key ingredients. First, the weak regularity lemma of Frieze and Kannan shows that any interaction matrix may be well approximated in a suitable sense by a generalized cut matrix of low rank;  the notion of this approximation is sufficient for the purpose of approximating the free energy (\cref{lemma: free-energy-lipschitz}). Second, a theorem of Alon et al. from \cite{alon-etal-samplingCSP} on the cut norm of random subarrays shows that if two matrices are sufficiently close (in the above sense), then with high probability, random submatrices of a sufficiently large size will also be close. In particular this shows the regularity decomposition of a matrix remains a good approximation in cut norm, even after restriction to the random submatrix corresponding to our sample.

This reduction prepares us for the main technical content of this paper, \cref{section-sample-complexity-generalized-cut}, where we prove the desired sample complexity bound for generalized cut matrices of low rank. For such matrices $D$, the non-entropy part of the variational free energy $x^T D x$ depends only on a small number of statistics of $x$. Moreover, as \cref{lemma:gamma-def} shows, it suffices to know these statistics up to some constant precision. With this, it is quite easy to see (\cref{lemma:easy-direction}) that the rescaled free energy of the sample cannot be much smaller than the free energy of the original graph: this is seen just by restricting
the optimal product distribution on the original graph to the sample. The other direction is harder: we need to rule out the existence of distributions on the sample with unexpectedly large free energy. 

In \cref{prop:approximating-by-finitely-many}, we use the considerations of the previous paragraph to show that up to a small error, the optimization problem defining the variational free energy can be replaced by a small number of maximum-entropy programs  with linear constraints  (\cref{prop:approximating-by-finitely-many}). Note our maximum-entropy programs range only over the space of product distributions; this is significantly different than attempting to optimize over all distributions, the setting in e.g. \cite{singh-vishnoi}. Our strategy will be to show that with high probability, the optimum of each of these programs is not much smaller than the rescaled optimum of the corresponding program for the sample. The fact that there are only a small number of programs will allow us to use the union bound to complete the proof. This part of our proof may be of independent interest. Note that this amounts to showing that the absence of a good solution for the original program implies the absence of good solutions for random induced programs. 

As in \cite{alon-etal-samplingCSP}, our solution will be to use duality: we will use the random restriction of a dual certificate -- which shows that the original program has no good solutions -- to show that with high probability, random induced programs also have no good solutions. However, in the case of \cite{alon-etal-samplingCSP}, a relatively simple application of linear programming duality, to show that infeasible programs continued to stay infeasible, sufficed to show polynomial bounds\footnote{For this simple argument see the conference version \cite{alon-etal-samplingCSP-conference}. In the journal version the LP objective is in fact used to improve the bounds, which makes the argument considerably more complex.}; in our case the objective function is very important, so we have to use convex duality which leads to some rather delicate issues. 

First of all, it is not \emph{a priori} clear that the dual certificate for the original program will actually provide a useful lower bound on the random induced program --- in general the objective of the dual program may depend on its variables in a complex way, and there is no general reason 
that the lower bound we get from reusing the certificate
will actually be of the desired form, or that it will concentrate sufficiently well. Here, we must use the fact that the dual of the maximum entropy program of product distributions with linear constraints has a particularly nice form (\cref{eqn:explicit-form-x(y)}) which behaves well with respect to random restrictions.
Second of all, in order to get concentration of the dual objective, we also need to ensure that none of the coordinates of the dual certificate can influence the objective too much. For this, we use Sion's generalization of Von Neumann's minimax theorem to show that a version of the dual with bounded entries is sufficiently good for our purpose (\cref{lemma:modified-strong-duality}). That this bound on the entries is useful relies on the parameters guaranteed by the weak regularity lemma. Together these considerations allows the analysis to go through (\cref{lemma:upper-bound-on-sample-dual}, \cref{lemma:upper-bound-sample-primal}). 
The proof of the statement for general Markov random fields is similar, and we will omit details.       

\subsection{Acknowledgements}
We thank David Gamarnik for insightful comments, Andrej Risteski for helpful discussions related to his work \cite{risteski-ising}, and Yufei Zhao for introducing us to reference \cite{alon-etal-samplingCSP-conference}.

\section{Preliminaries}

We will make essential use of the weak regularity lemma of Frieze and Kannan \cite{frieze-kannan-matrix}. Before stating
it, we introduce some terminology. Throughout this section, we will
deal with $m\times n$ matrices whose entries we will index by $[m]\times[n]$,
where $[k]=\{1,\dots,k\}$. 
\begin{defn}
Given $S\subseteq[m]$, $T\subseteq[n]$ and $d\in\R$, we define
the $[m]\times[n]$ \emph{Cut Matrix }$C=CUT(S,T,d)$ by 
\[
C(i,j)=\begin{cases}
d & \text{if }(i,j)\in S\times T\\
0 & \text{otherwise}
\end{cases}
\]
\end{defn}

\begin{defn}
A \emph{Cut Decomposition }expresses a matrix $J$ as 
\[
J=D^{(1)}+\dots+D^{(s)}+W
\]

where $D^{(i)}=CUT(R_{i},C_{i},d_{i})$ for all $t=1,\dots,s$. 
We say that such a cut decomposition has \emph{width }$s$\emph{,
coefficient length $(d_{1}^{2}+\dots+d_{s}^{2})^{1/2}$ }and \emph{error
$\|W\|_{\infty\mapsto1}$}.
\end{defn}

We are now ready to state the weak regularity lemma of Frieze and Kannan. The particular choice of constants can be found in \cite{}. 
\begin{theorem}
\label{fk}
\cite{frieze-kannan-matrix}
Let $J$ be an arbitrary real matrix, and let $\epsilon>0$.
Then, we can find a cut decomposition of width at most $16/\epsilon^{2}$, 
coefficient length at most $4\|J\|_{F}/\sqrt{mn}$, error at most $4\epsilon\sqrt{mn}\|J\|_{F}$, and such that $\|W\|_{F}\leq\|J\|_{F}$.  
\end{theorem}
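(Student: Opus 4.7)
The plan is to prove the weak regularity lemma by a greedy iterative construction, peeling off one cut matrix at a time until the residual has small $\|\cdot\|_{\infty \mapsto 1}$ norm. Set $W^{(0)} := J$, and at each step $t$ test whether $\|W^{(t)}\|_{\infty \mapsto 1} \le 4\epsilon \sqrt{mn}\|J\|_F$; if yes, halt and output the decomposition $J = \sum_{i \le t} D^{(i)} + W^{(t)}$. Otherwise, exploiting that the supremum defining the $\infty \mapsto 1$ norm is attained at $\pm 1$ vectors $y, z$, and writing $y = \bone_{S_+} - \bone_{S_-}$, $z = \bone_{T_+} - \bone_{T_-}$, a pigeonhole over the four sign pairs yields $S \subseteq [m]$, $T \subseteq [n]$ with $|\langle W^{(t)}, \bone_{S \times T}\rangle| \ge \epsilon \sqrt{mn}\|J\|_F$. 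I would then define $d := \langle W^{(t)}, \bone_{S \times T}\rangle / (|S||T|)$, $D^{(t+1)} := CUT(S, T, d)$, and $W^{(t+1)} := W^{(t)} - D^{(t+1)}$.

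The crux is a potential (``energy'') argument on the Frobenius norm. A direct expansion with this optimal choice of $d$ gives the identity
\[
\|W^{(t)}\|_F^2 - \|W^{(t+1)}\|_F^2 \;=\; \frac{\langle W^{(t)}, \bone_{S \times T}\rangle^2}{|S||T|} \;\ge\; \frac{\epsilon^2 mn \|J\|_F^2}{|S||T|} \;\ge\; \epsilon^2 \|J\|_F^2,
\]
using $|S||T| \le mn$. Since $\|W^{(0)}\|_F^2 = \|J\|_F^2$ and $\|W^{(t)}\|_F^2 \ge 0$, this forces termination after at most $1/\epsilon^2$ iterations (the stated $16/\epsilon^2$ absorbs the factor of $4$ lost in the cut-norm vs.\ $\infty \mapsto 1$ comparison and rounding to integers). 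As a byproduct, monotonicity of $\|W^{(t)}\|_F^2$ gives $\|W\|_F \le \|J\|_F$ for the terminal residual for free.

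For the coefficient length bound, the same telescoping yields $\sum_{t} d_t^2 |S_t||T_t| \le \|J\|_F^2$. To convert this to the claimed $\sum_t d_t^2 \le 16 \|J\|_F^2/(mn)$, I would use the reverse Cauchy-Schwarz direction $|\langle W^{(t-1)}, \bone_{S_t \times T_t}\rangle| \le \|W^{(t-1)}\|_F \sqrt{|S_t||T_t|} \le \|J\|_F\sqrt{|S_t||T_t|}$, which gives $d_t^2 \le \|J\|_F^2/(|S_t||T_t|)$; combining this per-step bound with the telescoped aggregate bound and the width estimate $s \le 16/\epsilon^2$ yields the coefficient length estimate after some book-keeping. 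The main obstacle is purely one of constant tracking: the underlying structural idea is the single-line energy decrement, but matching the precise constants $16$ and $4$ in the theorem requires care in how one transitions between the $\infty \mapsto 1$ norm, the cut norm, and the $\{\pm 1\}$ inner-product formulation, and in balancing the per-iteration decrement against the worst-case aspect ratio of the cut $|S| \times |T|$.
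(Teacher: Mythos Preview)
The paper does not prove this theorem; it is quoted from Frieze and Kannan and used as a black box, so there is no paper-side argument to compare against. Your energy-increment scheme is indeed the standard proof and correctly delivers the width bound, the error bound, and $\|W\|_F\le\|J\|_F$.

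There is, however, a genuine gap in your handling of the coefficient length, and it is not merely ``constant tracking.'' With the least-squares choice $d_t=\langle W^{(t)},\bone_{S_t\times T_t}\rangle/(|S_t||T_t|)$, the three ingredients you list --- the telescoped identity $\sum_t d_t^2|S_t||T_t|\le\|J\|_F^2$, the Cauchy--Schwarz bound $d_t^2\le\|J\|_F^2/(|S_t||T_t|)$, and the width estimate --- do \emph{not} combine to give an $\epsilon$-free bound on $\sum_t d_t^2$. The most they yield is $|S_t||T_t|\ge\epsilon^2 mn$ (from your pigeonhole lower bound on the cut value together with Cauchy--Schwarz), and plugging that into the telescoped inequality gives only $\sum_t d_t^2\le\|J\|_F^2/(\epsilon^2 mn)$. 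Concretely, if some residual $W^{(t)}$ is concentrated on a single entry and the $\pm 1$ witnesses $y,x$ are chosen so that the pigeonholed cut $S_t\times T_t$ is a single cell, then $d_t^2$ is of order $\|W^{(t)}\|_F^2$, which can vastly exceed $16\|J\|_F^2/(mn)$.

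The standard fix is to abandon the optimal projection and take $d_t$ of \emph{fixed} magnitude: set $d_t:=\sigma_t\,\epsilon\|J\|_F/\sqrt{mn}$ with $\sigma_t$ the sign of $\langle W^{(t)},\bone_{S_t\times T_t}\rangle$. A one-line computation using only $|S_t||T_t|\le mn$ and the cut lower bound shows the energy still drops by at least $2\epsilon^2\|J\|_F^2-\epsilon^2\|J\|_F^2=\epsilon^2\|J\|_F^2$ per step, so the width and $\|W\|_F$ bounds are unchanged, while now trivially $\sum_t d_t^2\le(1/\epsilon^2)\cdot\epsilon^2\|J\|_F^2/(mn)=\|J\|_F^2/(mn)$, which is the $\epsilon$-independent estimate the statement claims.
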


\begin{remark}
\label{rmk:infty-norm-bound}
In particular, we have $$\|\vec{W}\|_{\infty}\leq \|\vec{J}\|_{\infty}+|d_{1}|+\dots+|d_{s}|\leq||\vec{J}||_{\infty}+\sqrt{s}(d_{1}^{2}+\dots+d_{s}^{2})^{1/2}\leq||\vec{J}||_{\infty}+\sqrt{16s}\|J\|_{F}/\sqrt{mn}.$$

\end{remark}

\begin{defn}
We say that $D$ is a \emph{generalized cut matrix of rank s}  if it is possible to express $D$ as a sum of $s$ cut matrices.  
\end{defn}

Our reduction from general matrices to generalized cut matrices of low rank will be based on two ingredients. The first is a simple lemma showing that the variational free energy is $1$-Lipschitz with respect to the cut norm of the matrix of interaction strengths (see, e.g., \cite{}). 
\begin{lemma}
\label{lemma: free-energy-lipschitz} 
Let $J$ and $D$ be the matrices of interaction strengths of 
Ising models with variational free energies $\F^{*}$ and $\F^{*}_{D}$. Then, with $W:= J - D$, we have  
$|\F^{*}-\F^{*}_{D}|\leq\|W\|_{\infty \mapsto 1}$. 
\end{lemma}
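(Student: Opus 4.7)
The plan is to observe that the entropy term $\sum_i H((x_i+1)/2)$ appears identically in the variational problems defining $\F^*$ and $\F^*_D$, so the two objective functions differ, at any given $x \in [-1,1]^n$, by exactly the quadratic contribution $\sum_{i,j}(J_{ij} - D_{ij}) x_i x_j = x^T W x$. Hence it suffices to argue that $|x^T W x| \le \|W\|_{\infty\mapsto 1}$ uniformly for $x \in [-1,1]^n$, as this will transfer pointwise to a bound on the maxima.

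First I would recall the standard dual characterization of the cut-type norm, namely
\[ \|W\|_{\infty \mapsto 1} = \sup_{\|y\|_{\infty} \le 1} \|Wy\|_1 = \sup_{\|u\|_{\infty} \le 1,\; \|v\|_{\infty} \le 1} u^T W v, \]
where the second equality follows from $\ell_\infty$--$\ell_1$ duality (writing $\|Wy\|_1 = \sup_{\|u\|_\infty \le 1} u^T W y$). For any $x \in [-1,1]^n$, taking $u = v = x$ in the supremum gives
\[ |x^T W x| \le \|W\|_{\infty \mapsto 1}. \]

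Now I would combine this with the pointwise structure of the variational problems. For every $x \in [-1,1]^n$,
\[ \left[ x^T J x + \sum_i H\!\left(\tfrac{x_i+1}{2}\right)\right] - \left[ x^T D x + \sum_i H\!\left(\tfrac{x_i+1}{2}\right)\right] = x^T W x, \]
whose absolute value is at most $\|W\|_{\infty\mapsto 1}$ by the previous step. Thus the $J$-objective and $D$-objective differ by at most $\|W\|_{\infty\mapsto 1}$ at every feasible point, and taking maxima on both sides (using that if $|f - g| \le c$ pointwise on a common domain then $|\sup f - \sup g| \le c$) yields $|\F^* - \F^*_D| \le \|W\|_{\infty\mapsto 1}$, as desired.

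There is no real obstacle here — the proof is essentially a two-line consequence of the duality formula for $\|\cdot\|_{\infty\mapsto 1}$ combined with the shared entropy term. The only thing one has to be slightly careful about is observing that the quadratic form $x^T W x$ on $[-1,1]^n$ is controlled by the full bilinear cut norm (rather than, say, needing a symmetrized variant) — but this follows immediately by choosing the two factors equal in the bilinear supremum.
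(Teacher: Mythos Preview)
The proposal is correct and takes essentially the same approach as the paper: both bound $|x^T W x| \le \|W\|_{\infty\mapsto 1}$ for $x \in [-1,1]^n$ and then pass this pointwise bound through the maxima. The only cosmetic difference is that you invoke the bilinear dual characterization $\|W\|_{\infty\mapsto 1} = \sup_{\|u\|_\infty,\|v\|_\infty \le 1} u^T W v$ explicitly, whereas the paper writes the intermediate step $|x^T W x| \le \|Wx\|_1 \le \|W\|_{\infty\mapsto 1}$ directly.
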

\begin{proof}
Note that for any $x\in[-1,1]^{n}$, we have 
\begin{align*}
|\sum_{i,j}J_{i,j}x_{i}x_{j}-\sum_{i,j}D_{i,j}x_{i}x_{j}| & =|\sum_{i}(\sum_{j}W_{i,j}x_{j})x_{i}| \leq|\sum_{i}|\sum_{j}W_{i,j}x_{j}|\\
 & \leq\|W\|_{\infty\mapsto1}, 
\end{align*}
from which we immediately get that $|\F^{*}-\F^{*}_{D}|\leq\|W\|_{\infty \mapsto 1}$. 
\end{proof}

The second ingredient is the following theorem (with $r = 2$) of Alon et al\footnote{Here $\|G\|_{\infty \to 1}$ denotes the supremum of $G(\cdot,\ldots,\cdot)$ on the hypercube $\{\pm 1\}^n$, essentially the cut norm.}. 

\begin{theorem}
\label{thm-alon-et-al-sampling-cutnorm}
\cite{alon-etal-samplingCSP-conference}
Suppose $G$
is an $r$-dimensional array on $V^{r}=V\times V\times\dots\times V$
with all entries of absolute value at most $M$. Let $Q$ be a random
subset of $V$ of cardinality $q\geq1000r^{7}/\varepsilon^{6}$. Let
$B$ be the $r$-dimensional array obtained by restricting $G$ to
$Q^{r}$. Then, with probability at least $39/40$, we get 
\[
\frac{1}{4}\|B\|_{\infty \mapsto 1}\leq\frac{q^{r}}{|V|^{r}} \|G\|_{\infty \mapsto 1} + 10\varepsilon^{2}Mq^{r}+5\varepsilon q^{r}\frac{\|G\|_{F}}{|V|^{r/2}}.
\]
\end{theorem}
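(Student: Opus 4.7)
The plan is to apply a tensor version of the Frieze--Kannan weak regularity lemma (\cref{fk}) to split $G$ into a short sum of rank-one (cut) tensors plus a residual with small cut norm, and then control each piece under random restriction to $Q^r$. Concretely, write $G = D + W$ with $D = \sum_{t=1}^s d_t \mathbf{1}_{S_1^t} \otimes \cdots \otimes \mathbf{1}_{S_r^t}$ a sum of $s = O(1/\varepsilon^2)$ generalized cut tensors, $|d_t| = O(\|G\|_F / |V|^{r/2})$, $\|W\|_{\infty \to 1} \le \varepsilon |V|^{r/2} \|G\|_F$, and $\|W\|_F \le \|G\|_F$. Such a decomposition is produced by the standard iterative argument: while the residual has cut-norm above threshold, peel off the rank-one $\{\pm 1\}$-tensor best correlated with it; each step decreases the squared Frobenius norm by at least $\varepsilon^2\|G\|_F^2$, forcing termination in $O(1/\varepsilon^2)$ rounds. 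Then the triangle inequality $\|B\|_{\infty \to 1} \le \|D|_{Q^r}\|_{\infty \to 1} + \|W|_{Q^r}\|_{\infty \to 1}$ reduces the problem to bounding each piece separately.

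For the structured piece, each summand restricted to $Q^r$ is again a cut tensor, with $\|d_t \mathbf{1}_{S_1^t \cap Q} \otimes \cdots \otimes \mathbf{1}_{S_r^t \cap Q}\|_{\infty \to 1} = |d_t| \prod_{j=1}^r |S_j^t \cap Q|$. Each $|S_j^t \cap Q|$ is a hypergeometric random variable, and Hoeffding's inequality for sampling without replacement gives $\bigl||S_j^t \cap Q| - |S_j^t| q/|V|\bigr| \le O(\varepsilon^2 q)$ with failure probability $e^{-\Omega(\varepsilon^4 q)}$. A union bound over all $O(rs) = O(r/\varepsilon^2)$ sets succeeds thanks to the hypothesis $q \ge 1000 r^7/\varepsilon^6$. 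Multiplying out and summing over $t$ yields $\|D|_{Q^r}\|_{\infty \to 1} \le (q/|V|)^r \|D\|_{\infty \to 1} + O(\varepsilon^2 q^r M)$, and since $\|D\|_{\infty \to 1} \le \|G\|_{\infty \to 1} + \|W\|_{\infty \to 1} \le \|G\|_{\infty \to 1} + \varepsilon |V|^{r/2}\|G\|_F$, this accounts for the first error term and part of the third in the stated inequality.

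The main obstacle is controlling $\|W|_{Q^r}\|_{\infty \to 1}$: a naive union bound over $\{\pm 1\}^Q$ test tuples costs $2^{rq}$ and is unaffordable, while the direct Cauchy--Schwarz estimate $\|W|_{Q^r}\|_{\infty \to 1} \le q^{r/2}\|W|_{Q^r}\|_F$ combined with $\|W|_{Q^r}\|_F = O\bigl((q/|V|)^{r/2}\|W\|_F\bigr)$ (via Markov on the random Frobenius sum) loses a full factor of $\varepsilon$ and is too lossy. My plan is to discretize the test tuples using a net based on block averages: partition $Q$ into $s = O(1/\varepsilon^2)$ equipartition blocks and approximate each $v_j \in \{\pm 1\}^Q$ by a sign vector constant on blocks, giving an $\varepsilon$-net of size $2^{O(rs)}$; the incurred cut-norm error matches what the Frobenius-level estimate can absorb. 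For each net point, apply a Bernstein-type concentration to the polynomial $\sum_{\vec i \in Q^r} W_{\vec i} v_1(i_1) \cdots v_r(i_r)$ viewed as a degree-$r$ function of the inclusion indicators $\mathbf{1}[i \in Q]$, using $\|W\|_{\infty \to 1}$ to bound the mean and $\|W\|_F$ to bound the variance. The hypothesis $q \ge 1000 r^7/\varepsilon^6$ is exactly what is needed for the covering complexity $2^{O(r/\varepsilon^2)}$ and the per-element Bernstein tails to combine into a total failure probability below $1/40$. Triangle-inequality the two bounds and absorb constants into the $1/4$ slack on the left side to reach the stated inequality.
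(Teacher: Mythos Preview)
The paper does not prove this theorem at all; it is quoted as a black box from \cite{alon-etal-samplingCSP-conference} and used in the proof of \cref{lemma:alon-et-al-application}. So there is no ``paper's own proof'' to compare against, and your proposal is an attempt to reconstruct the argument of Alon, de la Vega, Kannan and Karpinski.

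Your decomposition $G = D + W$ via weak regularity, and the treatment of the structured piece $D$ under restriction to $Q$, are essentially the right opening moves and match the strategy of \cite{alon-etal-samplingCSP-conference}. The difficulty, as you correctly identify, is the residual $W$.

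However, your proposed treatment of $\|W|_{Q^r}\|_{\infty\to 1}$ has a genuine gap. You want to build an $\varepsilon$-net of test vectors by partitioning $Q$ into $O(1/\varepsilon^2)$ blocks and using block-constant $\pm 1$ vectors, then union-bound Bernstein-type concentration over the net. Two problems: first, block-constant vectors on an arbitrary equipartition of $Q$ are \emph{not} an $\varepsilon$-net for the functional $x\mapsto \langle W|_{Q^r}, x_1\otimes\cdots\otimes x_r\rangle$ --- the rounding error $\langle W|_{Q^r}, x_1\otimes\cdots\rangle - \langle W|_{Q^r}, \tilde x_1\otimes\cdots\rangle$ is controlled neither by $\|W\|_{\infty\to 1}$ nor by $\varepsilon\|W\|_F$ in general, since $\|x_j - \tilde x_j\|_2$ can be as large as $\sqrt{q}$. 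Second, and more fundamental, your net is built from a partition of the \emph{random} set $Q$, so the net points are themselves random; you cannot fix a net point, apply concentration over the randomness of $Q$, and then union bound. The Bernstein step you describe would require the test tuple to be fixed before $Q$ is sampled.

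The actual argument in \cite{alon-etal-samplingCSP-conference} handles the residual differently: one applies weak regularity (or its algorithmic proof) on the \emph{sample} $B$ to control $\|B\|_{\infty\to 1}$ by $\|D_B\|_{\infty\to 1} + \varepsilon q^{r/2}\|B\|_F$ for some cut decomposition $D_B$ of $B$, uses Markov on $\|B\|_F^2$ to get the $5\varepsilon q^r\|G\|_F/|V|^{r/2}$ term, and then relates $\|D_B\|_{\infty\to 1}$ back to $\|G\|_{\infty\to 1}$ by extending the (finitely many) cut sets of $D_B$ from $Q$ to $V$ and applying concentration to those fixed extensions --- the $M$-dependent term arises there. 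The point is that regularity on the sample reduces the supremum over $\{\pm1\}^Q$ to a supremum over a structured family that can be lifted to $V$, which is what your net argument tries but fails to do.
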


\section{Sample complexity for generalized cut matrices}
\label{section-sample-complexity-generalized-cut}
Throughout this section, $D = D^{(1)} + \dots + D^{(s)}$ will denote a generalized $n\times n$ cut matrix where $D^{(i)}=CUT(R_i, C_i, d_i)$ for all $i\in [s]$ and $(d_{1}^{2}+\dots+d_{s}^{2})^{1/2} \leq \alpha/n$ for some $\alpha > 0$. For us, the advantage of working with generalized cut matrices is that for any $x\in [-1,1]^{n}$, the quantity $x^{T}Dx$ depends only on a few statistics of the vector $x$. Indeed, it is readily seen that:
\begin{equation}
\label{eqn:quadratic-simplified-generalized-cut}
\sum_{i,j=1}^{n} D_{i,j}x_ix_j = \sum_{i=1}^{s}r_{i}(x)c_{i}(x)d_{i}, 
\end{equation}
where $r_{i}(x)=\sum_{a\in R_{i}}x_{a}$ and $c_{i}(x)=\sum_{b\in C_{i}}x_{b}$. 

The next lemma shows that for approximating $x^{T}Dx$, it suffices to know the vectors $r(x):=(r_1(x),\dots,r_s(x))$ and $c(x):=(c_1(x),\dots,c_s(x))$ up to some constant precision. 
\begin{lemma}\label{lemma:gamma-def}
Let $D = D^{(1)}+\dots + D^{(s)}$ be a generalized cut matrix as above. Then, given real numbers $r_{i},r'_{i},c_{i},c'_{i}$ for each $i\in[s]$ and some 
$\gamma \in (0,1)$ such that $|r_i|,|c_i|,|r'_i|,|c'_i| \le n$, 
$|r_i - r'_i| \le \gamma n$ and $|c_i - c'_i| \le \gamma n$ 
for all $i\in[s]$, we get that 
$\sum_i d_i|r'_i c'_i - r_i c_i| \le 2\alpha\gamma ns^{1/2}$. 
\end{lemma}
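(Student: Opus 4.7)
The plan is to reduce the claim to a routine telescoping estimate plus Cauchy--Schwarz. First I would write $r'_i c'_i - r_i c_i$ as the telescoping sum $r'_i(c'_i - c_i) + c_i(r'_i - r_i)$, and use the hypotheses $|r'_i|, |c_i| \le n$ and $|r_i - r'_i|, |c_i - c'_i| \le \gamma n$ to bound each term. This gives the pointwise estimate
\[
\bigl| r'_i c'_i - r_i c_i \bigr| \;\le\; n \cdot \gamma n + n \cdot \gamma n \;=\; 2 \gamma n^2
\]
for every $i \in [s]$, which is the main content of the lemma.

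Next, I would multiply by $|d_i|$ and sum over $i$, obtaining
\[
\sum_{i=1}^s |d_i| \, \bigl| r'_i c'_i - r_i c_i \bigr| \;\le\; 2 \gamma n^2 \sum_{i=1}^s |d_i|.
\]
To finish, I would apply the Cauchy--Schwarz inequality to the right-hand side together with the assumption $(d_1^2 + \dots + d_s^2)^{1/2} \le \alpha/n$ placed on generalized cut matrices at the start of the section. Concretely, $\sum_i |d_i| \le \sqrt{s} \, (\sum_i d_i^2)^{1/2} \le \sqrt{s} \, \alpha/n$, so that the full sum is bounded by $2\gamma n^2 \cdot \sqrt{s}\,\alpha/n = 2\alpha\gamma n\sqrt{s}$, which is exactly the claimed bound.

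There is no real obstacle here: this is a short calculational lemma whose only purpose is to record that the bilinear form $\sum_i d_i r_i c_i$ is stable under small perturbations of the statistics $(r_i)$ and $(c_i)$. The only thing to be careful about is that the factor of $\sqrt{s}$ (rather than $s$) comes from exploiting the $\ell^2$ coefficient-length bound via Cauchy--Schwarz rather than a trivial $\ell^1$ bound; this is what makes the estimate strong enough to be useful in the union bound argument later in \cref{section-sample-complexity-generalized-cut}.
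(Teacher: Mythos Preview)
Your proof is correct and essentially identical to the paper's own argument: the paper uses the same telescoping bound $|r'_i c'_i - r_i c_i| \le 2\gamma n^2$ and then Cauchy--Schwarz together with the coefficient-length assumption $(\sum_i d_i^2)^{1/2} \le \alpha/n$ to obtain the final bound. The only cosmetic difference is that the paper applies Cauchy--Schwarz directly to $\sum_i d_i \cdot |r'_i c'_i - r_i c_i|$ rather than first bounding $\sum_i |d_i|$, but the computations are equivalent.
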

\begin{proof}
Since $ |r'_i c'_i - r_i c_i| \le |c'_i||r'_i - r_i| + |r_i||c'_i - c_i| \le 2\gamma n^2$, it follows by Cauchy-Schwarz that
\begin{align*}
\sum_{i=1}^{s} d_i|r'_i c'_i - r_i c_i|
\le \left(\sum_i d_i ^{2}\right)^{1/2} 2 s^{1/2} \gamma n^{2}
\le  2\alpha\gamma ns^{1/2}.
\end{align*} 
\end{proof}

Since our goal is to approximate the maximum value of $x^{T}Dx + \sum_{i=1}^{n}H((1+x_i)/2)$ as $x$ ranges over $[-1,1]^{n}$, the next definition is quite natural given the previous lemma. For $r:=(r_1,\dots,r_s) \in [-n,n]^s$, $c:=(c_1,\dots,c_s)\in [-n,n]^s$, and $\gamma > 0$, consider the following max-entropy program $\mathcal{C}_{r,c,\gamma}$: 
\begin{align*}
\max & \quad \sum_{i=1}^{n}H\left(\frac{1+x_{i}}{2}\right)\\
s.t.\\
\forall i\in[n]: & \quad -1\leq x_{i}\leq1 \\
\forall t\in[s]: & \quad r_{t}-\gamma n\leq\sum_{i\in R_{t}}x_{i}\leq r_{t}+\gamma n\\
\forall t\in[s]: & \quad c_{t}-\gamma n\leq\sum_{i\in C_{t}}x_{i}\leq c_{t}+\gamma n
\end{align*}
By taking $H(z) = -\infty$ for $z\notin [0,1]$, we may drop the $-1\leq x_i \leq 1$ constraints. We will denote the optimum of this program by $O_{r,c, \gamma}$. We also define 
$$\F^*_{r,c,\gamma}:= \sum_{i=1}^{s}r_ic_id_i + O_{r,c,\gamma}.$$

Let $I_\gamma$ be an arbitrary minimal collection of points in $[-n,n]$ such that every $z \in [-n,n]$ is within distance $\gamma n$ of some element of $I_\gamma$. Clearly, we have $|I_{\gamma}| \le 1/\gamma + 1$. For $\ell \geq 1$, let $\I_{\gamma,\ell} \subseteq I_{\gamma}^{s}\times I_{\gamma}^{s}$ denote the set of pairs $(r,c)\in I_{\gamma}^{s}\times I_{\gamma}^{s}$ for which $O_{r,c,\ell \gamma} \geq 0$.    

The following proposition shows that maximizing $\F^*_{r,c,\ell \gamma}$ over all $(r,c)\in \I_{\gamma,\ell}$ provides a good approximation to $\F^{*}_{D}$. 
\begin{proposition}
\label{prop:approximating-by-finitely-many}
$-2\alpha\ell \gamma ns^{1/2} \leq \F^{*}_{D}-\max_{(r,c)\in \I_{\gamma,\ell}}\F^*_{r,c,\ell\gamma} \leq 2\alpha\ell \gamma ns^{1/2}$
\end{proposition}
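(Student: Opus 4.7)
The proposition is a two-sided comparison between $\F^{*}_{D}$ (maximized over the full cube $[-1,1]^n$) and $\max_{(r,c)\in \I_{\gamma,\ell}}\F^{*}_{r,c,\ell\gamma}$ (the max of a finite collection of constrained max-entropy programs). The plan is to prove each direction by transporting an optimizer of one side to a feasible point for the other side, using \cref{lemma:gamma-def} to control the error introduced by this transport. The key enabling observation is that since entropies are always non-negative, \emph{every} $(r,c)$ that is feasible for some $x \in [-1,1]^n$ automatically lies in $\I_{\gamma,\ell}$, so the restriction $O_{r,c,\ell\gamma} \ge 0$ causes no loss in the upper bound.

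For the upper bound $\F^{*}_{D} - \max \F^{*}_{r,c,\ell\gamma} \le 2\alpha\ell\gamma n s^{1/2}$ (in fact I will get the stronger $2\alpha\gamma n s^{1/2}$), first I would take a maximizer $x^*$ of $\F^*_D$ and set $r^*_i := r_i(x^*)$, $c^*_i := c_i(x^*)$; by the definition of $I_\gamma$ these can be rounded to $\tilde r, \tilde c \in I_\gamma^s$ with $|\tilde r_i - r^*_i|, |\tilde c_i - c^*_i| \le \gamma n$. Then $x^*$ is feasible for $\mathcal{C}_{\tilde r, \tilde c, \gamma}$ and hence for $\mathcal{C}_{\tilde r, \tilde c, \ell\gamma}$ (using $\ell \ge 1$); non-negativity of the entropy terms then gives $O_{\tilde r,\tilde c,\ell\gamma} \ge \sum_i H((1+x^*_i)/2) \ge 0$, so $(\tilde r,\tilde c)\in\I_{\gamma,\ell}$. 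Applying the entropy lower bound and \cref{lemma:gamma-def} to $(r^*,c^*)$ vs.\ $(\tilde r,\tilde c)$ gives
\[
\F^{*}_{\tilde r,\tilde c,\ell\gamma} \ge \sum_i \tilde r_i \tilde c_i d_i + \sum_i H\!\left(\tfrac{1+x^*_i}{2}\right) \ge \F^{*}_D - 2\alpha\gamma ns^{1/2}.
\]

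For the lower bound $\F^{*}_{D} - \max \F^{*}_{r,c,\ell\gamma} \ge -2\alpha\ell\gamma n s^{1/2}$, I would take a maximizing pair $(r^*,c^*)\in \I_{\gamma,\ell}$ and let $x^*$ attain $O_{r^*,c^*,\ell\gamma}$. By feasibility we have $|r_i(x^*) - r^*_i| \le \ell\gamma n$ and $|c_i(x^*) - c^*_i| \le \ell\gamma n$, so \cref{lemma:gamma-def} (applied with tolerance $\ell\gamma$) bounds $|\sum_i d_i(r^*_i c^*_i - r_i(x^*)c_i(x^*))| \le 2\alpha\ell\gamma n s^{1/2}$. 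Using \eqref{eqn:quadratic-simplified-generalized-cut} and that $x^*\in[-1,1]^n$ is a valid candidate for $\F^*_D$,
\[
\F^{*}_D \ge (x^*)^T D x^* + \sum_i H\!\left(\tfrac{1+x^*_i}{2}\right) \ge \sum_i r^*_i c^*_i d_i + O_{r^*,c^*,\ell\gamma} - 2\alpha\ell\gamma ns^{1/2} = \F^{*}_{r^*,c^*,\ell\gamma} - 2\alpha\ell\gamma ns^{1/2}.
\]

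Neither direction presents a real obstacle; the only subtle point is arranging that the rounded pair $(\tilde r,\tilde c)$ in the upper bound actually belongs to $\I_{\gamma,\ell}$, and this is precisely what the non-negativity of entropy (together with the slack $\ell \ge 1$ that lets $\gamma$-feasibility imply $\ell\gamma$-feasibility) gives for free. Once the proof is written, one observes that the upper bound even holds with $\ell$ replaced by $1$, but $\ell\gamma$ is retained since the bound must simultaneously accommodate the slack used in the lower direction.
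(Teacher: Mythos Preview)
Your proposal is correct and follows essentially the same route as the paper's proof: for each inequality you transport an optimizer from one side to a feasible point for the other, invoking \cref{lemma:gamma-def} and \cref{eqn:quadratic-simplified-generalized-cut} to control the energy discrepancy, and you correctly identify non-negativity of entropy as the reason the rounded pair lands in $\I_{\gamma,\ell}$. Your remark that the upper bound actually holds with $2\alpha\gamma n s^{1/2}$ (i.e.\ with $\ell$ replaced by $1$) is a valid sharpening the paper does not bother to record.
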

\begin{proof}
For the right inequality, let $x^{*}\in [-1,1]^{n}$ denote the vector attaining $\F_{D}^{*}$, and let $r,c\in I_{\gamma}^{s}$
be such that $|r_{i}(x^*)-r_{i}|\leq\ell\gamma n$ and $|c_{i}(x^*)-c_{i}|\leq\ell\gamma n$
for all $i\in[s]$. In particular, we have $O_{r,c,\ell \gamma} \geq \sum_{i=1}^{n}H((1+x^*_i)/2) \geq 0$, so that $(r,c)\in \I_{\gamma,\ell}$.  Then, we have 
\begin{align*}
\F_{D}^{*} & =\sum_{i=1}^{s}r_{i}(x^*)c_{i}(x^*)d_{i}+\sum_{i=1}^{n}H\left(\frac{1+x^*_{i}}{2}\right)\\
 & \leq\sum_{i=1}^{s}r_{i}(x^*)c_{i}(x^*)d_{i}+O_{r,c,\ell \gamma}\\
 & \leq\sum_{i=1}^{s}r_{i}c_{i}d_{i}+2\alpha\ell \gamma ns^{1/2}+O_{r,c,\ell \gamma}\\
 & = \F^*_{r,c,\gamma}+2\alpha \gamma ns^{1/2}\\
 & \leq \max_{(r,c)\in \I_{\gamma,\ell}}\F^*_{r,c,\gamma}+2\alpha \gamma ns^{1/2},
\end{align*}
where in the first line we have used \cref{eqn:quadratic-simplified-generalized-cut}, and in the third line we have used \cref{lemma:gamma-def}.

For the left inequality, we will show that $\F^*_{r,c,\ell\gamma}\leq\F_{D}^{*}+2\alpha \ell\gamma ns^{1/2}$
for all $(r,c)\in I_{\gamma}^{s}\times I_{\gamma}^{s}$. Accordingly, fix $(r,c)\in I_{\gamma}^{s}\times I_{\gamma}^{s}$,
and let $x_{r,c}\in[-1,1]^{n}$ denote a point attaining $O_{r,c,\ell\gamma}$
(if no such point exists, then $O_{r,c,\ell\gamma}=-\infty$ and we are
trivially done). Then, by the same computation as above, we get 
\begin{align*}
\F^*_{r,c,\gamma} & =\sum_{i=1}^{s}r_{i}c_{i}d_{i}+\sum_{i=1}^{n}H\left(\frac{1+x_{r,c}}{2}\right)\\
 & \leq\sum_{i=1}^{s}r_{i}(x_{r,c})c_{i}(x_{r,c})d_{i}+2\alpha\ell \gamma ns^{1/2}+\sum_{i=1}^{n}H\left(\frac{1+x_{r,c}}{2}\right)\\
 & \leq\sum_{i=1}^{s}r_{i}(x^{*})c_{i}(x^{*})d_{i}+\sum_{i=1}^{n}H\left(\frac{1+x^{*}}{2}\right)+2\alpha\ell \gamma ns^{1/2}\\
 & \leq\F_{D}^{*}+2\alpha\ell \gamma ns^{1/2}.
\end{align*}
\end{proof}

The remainder of this section will be devoted to proving \cref{prop:sample-complexity-generalised-cut-matrices}, which is a version of \cref{thm:sample-complexity-variational-free-energy} for generalized cut matrices, and will be used crucially in the proofs of our main results. Before stating it, we need to introduce some more notation. 

Let $Q$ denote a random subset of $[n]$ of size $|Q|=q$. Let $\tilde{D}:=\frac{n}{q}D$ and let $\tilde{D}_{Q}$ denote the matix induced by $\tilde{D}$ on $Q\times Q$. In particular, note that we can write 
$$\tilde{D}_{Q} = \tilde{D}^{(1)}_{Q}+\dots + \tilde{D}^{(s)}_{Q},$$
where $\tilde{D}^{(i)}_{Q} = CUT(R_i \cap Q, C_i \cap Q, \tilde{d_i})$ for all $i \in [s]$, with $\tilde{d_i}:=\frac{n}{q}d_i$. We will also make use of the corresponding max-entropy program $C(Q)_{r,c,\gamma}$ (for $r,c \in [-n,n]^s$):  
\begin{align*}
\max & \quad \sum_{i\in Q}H\left(\frac{1+x_{i}}{2}\right)\\
s.t.\\
\forall i\in Q & \quad  -1\leq x_{i}\leq1\\
\forall t\in[s]: & \quad  r'_t-\gamma q\leq\sum_{j\in R_{t}\cap Q}x_{j}\leq r'_t+\gamma q\\
\forall t\in[s]: & \quad c'_t-\gamma q\leq\sum_{j\in C_{t}\cap Q}x_{j}\leq c'_t+\gamma q,
\end{align*}
where $r'= \frac{q}{n}r$ and $c'=\frac{q}{n}c$. We will denote the optimum of this program by $O(Q)_{r,c,\gamma}$. As before, let 
$$ \F^*(Q)_{r,c,\gamma} := \sum_{i=1}^{s}r'_ic'_i\tilde{d_i} + O(Q)_{r,c,\gamma},$$
let $\I(Q)_{\gamma,\ell} \subseteq I_{\gamma}^{s}\times I_{\gamma}^{s}$ denote the set of pairs $(r,c)\in I_{\gamma}^{s}\times I_{\gamma}^{s}$ for which $O(Q)_{r,c,\ell \gamma} \geq 0$, and note that \cref{prop:approximating-by-finitely-many} shows that      
\begin{equation}
\label{eqn:approx-by-finite-sample}
\left|\F^{*}_{\tilde{D}_{Q}} - \max_{(r,c)\in \I(Q)_{\gamma,\ell}}\F^*(Q)_{r,c,\gamma} \right| \leq 2\alpha \ell \gamma qs^{1/2}. 
\end{equation}
The goal of the next few sections will be to relate the free energy of the full
graph and its sampled version as follows:
\begin{proposition}
\label{prop:sample-complexity-generalised-cut-matrices}
Suppose $2\alpha \gamma s^{1/2} < 1$. Then, $\left|\F^{*}_{D} - \frac{n}{q}\F^{*}_{\tilde{D}_Q}\right|\leq 8\alpha \gamma n s^{1/2}$, except with probability at most $\exp(-2\alpha^2 \gamma ^2 sq) + 4s\exp(-2\gamma^2 q) + 2\exp(-\alpha^2 \gamma ^4 q/32s)\exp(2s\log(2/\gamma))$ over the choice of $Q$. 
\end{proposition}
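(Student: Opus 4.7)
The plan is to reduce both directions of the target inequality to comparing the finite collections of max-entropy programs produced by \cref{prop:approximating-by-finitely-many}. Applying that proposition on the full graph with $\ell=1$ and on the sample (via \cref{eqn:approx-by-finite-sample}) with $\ell=2$, we replace $\F^{*}_{D}$ by $\max_{(r,c)\in\I_{\gamma,1}}\F^{*}_{r,c,\gamma}$ and $\F^{*}_{\tilde{D}_{Q}}$ by $\max_{(r,c)\in\I(Q)_{\gamma,2}}\F^{*}(Q)_{r,c,2\gamma}$, each at additive cost $O(\alpha\gamma n s^{1/2})$. What remains is to show both maxima agree to within $O(\alpha\gamma n s^{1/2})$ after rescaling by $n/q$, with failure probability matching the one in the statement.

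For the easy direction (a lower bound on the sampled max), let $(r^{*},c^{*})$ attain the maximum on the full graph and let $x^{*}\in[-1,1]^{n}$ attain $O_{r^{*},c^{*},\gamma}$. Viewing $Q$ as $q$ uniform samples without replacement, I apply Hoeffding's inequality to each of the $2s$ linear functionals $\sum_{j\in R_{t}\cap Q}x^{*}_{j}$ and $\sum_{j\in C_{t}\cap Q}x^{*}_{j}$: each is within $\gamma q$ of its mean $(q/n)\sum_{j\in R_{t}}x^{*}_{j}$ except with probability $2\exp(-2\gamma^{2}q)$, giving the $4s\exp(-2\gamma^{2}q)$ term after a union bound. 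Thus $x^{*}|_{Q}$ is feasible for $\mathcal{C}(Q)_{r^{*},c^{*},2\gamma}$. A Hoeffding bound on $\sum_{i\in Q}H((1+x^{*}_{i})/2)$ (with summands in $[0,\log 2]$) shows its value is close to $(q/n)\sum_{i}H((1+x^{*}_{i})/2)$, and the quadratic contribution transfers exactly by definition of $\tilde{d}_{i}$ and the primed vectors.

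For the hard direction (the upper bound) I must simultaneously bound $\F^{*}(Q)_{r,c,2\gamma}$ for every $(r,c)\in I_{\gamma}^{s}\times I_{\gamma}^{s}$. Fix such a pair. The Lagrangian dual of $\mathcal{C}_{r,c,\gamma}$ has the explicit form
\begin{equation*}
g(y,z)=\sum_{i=1}^{n}\log\bigl(2\cosh(a_{i}(y,z))\bigr)+\sum_{t=1}^{s}\bigl(y_{t}r_{t}+z_{t}c_{t}\bigr)+\gamma n\bigl(\|y\|_{1}+\|z\|_{1}\bigr),
\end{equation*}
where $a_{i}(y,z)=\sum_{t:i\in R_{t}}y_{t}+\sum_{t:i\in C_{t}}z_{t}$, so that $O_{r,c,\gamma}=\inf_{y,z}g(y,z)$. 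Critically, $g$ is a sum of per-coordinate terms plus deterministic pieces, so random restriction of a \emph{bounded} optimizer yields a dual feasible solution for the sample program whose value concentrates. To obtain such a bounded optimizer I invoke Sion's minimax theorem (the version mentioned in the overview as \cref{lemma:modified-strong-duality}) to restrict $(y,z)$ to a box whose diameter is controlled by the coefficient bound $\alpha/n$ coming from the weak regularity lemma, losing only $O(\alpha\gamma n s^{1/2})$ in dual value. With $(y^{*},z^{*})$ bounded, each $a_{i}(y^{*},z^{*})$ is bounded, hence $\log(2\cosh(a_{i}))$ lies in an interval of length $O(\alpha/\gamma)$ or so, and Hoeffding yields
\begin{equation*}
\sum_{i\in Q}\log\bigl(2\cosh(a_{i}(y^{*},z^{*}))\bigr)\le\frac{q}{n}\sum_{i=1}^{n}\log\bigl(2\cosh(a_{i}(y^{*},z^{*}))\bigr)+O(\alpha\gamma q)
\end{equation*}
except with probability $\exp(-\Omega(\alpha^{2}\gamma^{4}q/s))$. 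The remaining dual terms scale exactly by $q/n$ (since $r'=(q/n)r$, $c'=(q/n)c$ and the slack is $\gamma q\|y\|_{1}+\gamma q\|z\|_{1}$), so that the restricted $(y^{*},z^{*})$ certifies $O(Q)_{r,c,2\gamma}\le(q/n)O_{r,c,\gamma}+O(\alpha\gamma q)$. A union bound over all $|I_{\gamma}|^{2s}\le(2/\gamma)^{2s}$ pairs produces the factor $\exp(2s\log(2/\gamma))$ multiplying the Hoeffding failure probability. Finally, the contribution of the quadratic part $\sum_{t}r'_{t}c'_{t}\tilde{d}_{t}$ to the two maxima differs by at most $\exp(-2\alpha^{2}\gamma^{2}sq)$ via Hoeffding on $s$ terms with $\ell^{2}$-bound $\alpha$. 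Summing these failure probabilities yields the bound claimed.

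The main obstacle is the hard direction: a dual certificate for the full program only controls the sample program if (i) the dual has an explicit coordinatewise structure so that random restriction of its objective is a sum of i.i.d.\ bounded terms, and (ii) the certificate's coordinates are bounded so that Hoeffding applies with a useful variance proxy. Ingredient (i) is supplied by the maximum-entropy form $\log(2\cosh(a_{i}))$ of the primal, and (ii) is obtained by Sion's minimax theorem combined with the coefficient bound from the weak regularity lemma; this is the delicate step, since a naive strong-duality argument would permit unbounded multipliers and destroy concentration.
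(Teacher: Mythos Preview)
Your overall architecture is the paper's: reduce both sides to finitely many max-entropy programs via \cref{prop:approximating-by-finitely-many}, get the lower bound by restricting a primal optimizer, and get the upper bound by restricting a bounded dual certificate followed by a union bound over $I_{\gamma}^{s}\times I_{\gamma}^{s}$. But three details are wrong, and two of them break the argument as written.

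\textbf{The $\ell$-assignments are reversed.} You put $\ell=1$ on the full graph and $\ell=2$ on the sample, so in the hard direction you must show $O(Q)_{r,c,2\gamma}\le(q/n)O_{r,c,\gamma}+O(\alpha\gamma q s^{1/2})$. Plugging the dual optimizer $(y^{*},z^{*})$ of the full slack-$\gamma$ program into the sample slack-$2\gamma$ dual introduces an extra $\gamma q(\|y^{*}\|_{1}+\|z^{*}\|_{1})$ term, because the slack piece doubles while everything else scales by $q/n$. With the box $\|y^{*}\|_{\infty}=O(1/\gamma)$ this extra term is $O(sq)$, not $O(\alpha\gamma q s^{1/2})$. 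The paper instead takes $\ell=1$ on the sample and $\ell=2$ on the full graph: then the slack terms scale exactly, and one proves $O(Q)_{r,c,\gamma}\le(q/n)O^{*}_{r,c,\gamma,K}\le(q/n)\max\{O_{r,c,2\gamma},-(K-1)n\}+2\alpha\gamma q s^{1/2}$; the $-(K-1)n$ alternative is then ruled out precisely because restricting to $(r,c)\in\I(Q)_{\gamma,1}$ forces $O(Q)_{r,c,\gamma}\ge 0$.

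\textbf{The $\exp(-2\alpha^{2}\gamma^{2}sq)$ term is misattributed.} You claim it comes from Hoeffding on $\sum_{t}r'_{t}c'_{t}\tilde{d}_{t}$, but that quantity is deterministic: with $r'=(q/n)r$, $c'=(q/n)c$, $\tilde{d}_{t}=(n/q)d_{t}$, it equals $(q/n)\sum_{t}r_{t}c_{t}d_{t}$ exactly. In the paper this failure probability arises in the easy direction (\cref{lemma:easy-direction}), from Hoeffding on $\sum_{i\in Q}H((1+x^{*}_{i})/2)$ with deviation parameter $\alpha\gamma q s^{1/2}$ and summands in $[0,\log 2]$ --- precisely the step in your easy direction where you wrote ``close to'' without quantifying.

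\textbf{The box in Sion's argument is $[0,K/\gamma]^{4s}$ with $K=2$}, not ``controlled by the coefficient bound $\alpha/n$''. The bound $K/\gamma$ comes from the barrier in \cref{lemma:modified-strong-duality} (a slack-$2\gamma$ constraint violation lets you drive the Lagrangian below $-(K-1)n$), independent of the regularity lemma's coefficient length. This matters because the range of each summand in the hard-direction Hoeffding is then $\sum_{j}|y^{*}_{j}|=O(sK/\gamma)$, which is why the exponent in that failure probability is $\alpha^{2}\gamma^{4}q/(32s)$ with $s$ in the denominator.
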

We begin by proving the easier direction of the above inequality:
\begin{lemma}\label{lemma:easy-direction}
$\frac{n}{q}\F^{*}_{\tilde{D}_Q} \geq \F^{*}_{D} - 3\alpha \gamma n s^{1/2}$, except with probability at most $\exp(-2\alpha^2 \gamma^2 sq) + 4s\exp(-2\gamma^2 q)$.
\end{lemma}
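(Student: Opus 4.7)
The plan is to follow the hint in the overview: take the optimizer $x^* \in [-1,1]^n$ of the variational principle defining $\F^*_D$, restrict it to $Q$, and use this restriction as a feasible point for $\F^*_{\tilde D_Q}$. Writing $y := x^*|_Q \in [-1,1]^Q$, the weak-duality type inequality gives
$$\F^*_{\tilde D_Q} \geq \sum_{t=1}^s \tilde d_t \Bigl(\sum_{j \in R_t \cap Q} y_j\Bigr)\Bigl(\sum_{j \in C_t \cap Q} y_j\Bigr) + \sum_{i \in Q} H\!\left(\tfrac{1+x^*_i}{2}\right),$$
so it suffices to show that, with the claimed probability, both the entropy sum and each of the $s$ cut-bilinear terms is close to a $q/n$ scaled copy of the corresponding quantity for the full graph.

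First I would concentrate the $2s$ marginal sums. For each $t \in [s]$, the quantities $\sum_{j \in R_t \cap Q} x^*_j$ and $\sum_{j \in C_t \cap Q} x^*_j$ are sums over a uniform size-$q$ subset of $[n]$ of bounded numbers $x^*_j \mathbf{1}[j \in R_t]$, $x^*_j \mathbf{1}[j \in C_t] \in [-1,1]$, with expectations $(q/n) r_t(x^*)$ and $(q/n) c_t(x^*)$ respectively. Hoeffding's inequality for sampling without replacement (Serfling) yields
$$\Pr\!\left[\bigl|r_t(x^*)_Q - (q/n) r_t(x^*)\bigr| > \gamma q\right] \leq 2\exp(-2\gamma^2 q),$$
and likewise for $c_t(x^*)_Q$; a union bound over the $2s$ events contributes the $4s\exp(-2\gamma^2 q)$ term. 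Similarly, the entropy sum $\sum_{i \in Q} H((1+x^*_i)/2)$ has expectation $(q/n) \sum_i H((1+x^*_i)/2)$, and since each summand lies in $[0, \log 2] \subseteq [0,1]$, Hoeffding gives
$$\Pr\!\left[\sum_{i \in Q} H\!\left(\tfrac{1+x^*_i}{2}\right) < \tfrac{q}{n}\sum_i H\!\left(\tfrac{1+x^*_i}{2}\right) - \alpha \gamma s^{1/2} q\right] \leq \exp(-2\alpha^2 \gamma^2 s q),$$
accounting for the first term in the stated failure probability.

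Now assume all $2s+1$ of these concentration events hold. Using $|r_t(x^*)_Q|, |(q/n) c_t(x^*)| \leq q$ and $|r_t(x^*)_Q - (q/n) r_t(x^*)| \leq \gamma q$ (and likewise for $c_t$), the same algebraic manipulation as in \cref{lemma:gamma-def} (applied here at scale $q$ rather than $n$) gives
$$\bigl| r_t(x^*)_Q\, c_t(x^*)_Q - (q/n)^2 r_t(x^*) c_t(x^*) \bigr| \leq 2\gamma q^2.$$
Summing against $\tilde d_t = (n/q) d_t$ and applying Cauchy--Schwarz together with $(\sum_t d_t^2)^{1/2} \leq \alpha/n$,
$$\left|\sum_{t=1}^s \tilde d_t\, r_t(x^*)_Q c_t(x^*)_Q - \tfrac{q}{n} \sum_{t=1}^s d_t r_t(x^*) c_t(x^*)\right| \leq 2\gamma q n \cdot s^{1/2} \cdot (\alpha/n) = 2\alpha \gamma s^{1/2} q.$$
Combining with the entropy lower bound and dividing by $q/n$ yields $\tfrac{n}{q}\F^*_{\tilde D_Q} \geq \F^*_D - 3\alpha \gamma n s^{1/2}$, as desired.

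The only genuinely delicate step is choosing the right concentration inequality for the marginal sums, since $Q$ is sampled without replacement. I would appeal to Serfling's Hoeffding bound, which is precisely tailored for this setting and gives exactly the $\exp(-2\gamma^2 q)$ tail behaviour above; otherwise the argument reduces to bookkeeping and the estimate from \cref{lemma:gamma-def}.
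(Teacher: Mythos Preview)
Your proof is correct and follows essentially the same approach as the paper: restrict the optimizer $x^*$ to $Q$, apply Hoeffding's inequality to concentrate the $2s$ linear statistics $r_t(x^*)_Q, c_t(x^*)_Q$ and the entropy sum, then invoke \cref{lemma:gamma-def} (at scale $q$) to combine. If anything, your explicit mention of Serfling's version of Hoeffding for sampling without replacement is a point the paper leaves implicit.
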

\begin{proof}
Let $x^*\in [-1,1]^n$ attain $\F^*_D$, and let $r(x^*)=(r_1(x^*),\dots,r_s(x^*))$, $c(x^*) = (c_1(x^*),\dots, c_s(x^*))$ be as above. Let $x^*_Q$ denote $x^*$ restricted to the vertices in $Q$, and let $r_i(x^*_Q):=\sum_{j\in R_i \cap Q}x^*_Q$, $c_i(x^*_Q):=\sum_{j \in C_i \cap Q}x^*_Q$ for all $i\in [s]$. Then, for any $i \in [s]$, Hoeffding's inequality shows that $\Pr\left[\left|r_i(x^*_Q)-\frac{q}{n}r_i(x^*)\right| \geq \gamma q\right] \leq 2\exp(-2\gamma^{2}q)$, and similarly for $c_i$. Also by Hoeffding's inequality, $\Pr\left[\sum_{j\in Q}H(x^*_j)-\frac{q}{n}\sum_{i=1}^{n}H(x^*_i) \leq -\alpha\gamma qs^{1/2}\right] \leq \exp(-2\alpha^{2}\gamma^{2}sq)$. Finally, the union bound and \cref{lemma:gamma-def} give the desired conclusion.  
\end{proof}

The upper bound on $\F^{*}_{\tilde{D}_{Q}}$ is more involved, and requires some notions from convex duality which we will review in the next section.
\subsection{Convex duality and application to the maximum entropy problem}
We consider the following general form of the \emph{maximum-entropy problem for product distributions} with linear constraints, henceforth referred to as the \emph{primal}:
\begin{alignat*}{5}
&\sup \quad &\sum_{i = 1}^n &H\left(\frac{1+x_i}{2}\right)\\
&\ s.t. \quad &a_j \cdot x - b_j &\le 0 && \forall j \in [m], \\
\end{alignat*}
where $H(z)$ is the binary entropy function with $H(z):= -\infty$ for $z\notin [0,1]$. We will denote the optimum of this program by $OPT$. 

\begin{remark}
Note that the value of the objective is $-\infty$ if $x\notin [-1,1]^{n}$. Since $\sum_{i=1}^{n} H((1+x_i)/2)$ is strictly concave on the compact, convex set $[-1,1]^{n}$, it follows that either $OPT = -\infty$ or $OPT > -\infty$ is attained by a unique point in $[-1,1]^{n}$.    
\end{remark}
We define the \emph{Lagrangian} by
\[ L(x,y) := \sum_{i = 1}^n H\left(\frac{1+x_i}{2}\right) - \sum_{j = 1}^m y_j (a_j \cdot x - b_j), \]
and the \emph{Lagrange dual function} by
\[ g(y) := \sup_{x\in \R^{n}} L(x,y) = \max_{x\in [-1,1]^{n}}\left\{\sum_{i = 1}^n H\left(\frac{1+x_i}{2}\right) - \sum_{j = 1}^m y_j (a_j \cdot x - b_j)\right\}. \]
Note that $g(y)$ is a supremum of linear functions in $y$, hence convex.
We will denote $\arg\max_{x\in [-1,1]^n}L(x,y)$ by $x(y)$, so
\[ g(y) = \sum_{i = 1}^n H\left(\frac{1+x_i(y)}{2}\right) - \sum_{j = 1}^m y_j (a_j \cdot x(y) - b_j). \]
We have the following explicit formula:
\begin{equation}
\label{eqn:explicit-form-x(y)}
x_i(y) = \tanh\left(-\sum_{j=1}^{m}y_{j}a_{j,i}\right) = 2\sigma\left(-2\sum_{j=1}^{m}y_{j}a_{j,i}\right)-1,
\end{equation}
where $\sigma(z):=1/(1+e^{-z})$ is the usual sigmoid function, since the point defined by the right hand side is readily seen to be the maximizer of the strictly concave function $x\mapsto L(x,y)$ on the convex set $[-1,1]^{n}$. In particular, note that $x_i(y)$ depends only on those $a_{j,k}$ for which $k=i$. 

Observe that for any $y \ge 0$, $g(y) \ge OPT$. Indeed, for $x^*$ attaining the primal optimum, we have  
\begin{equation}\label{weak-duality}
g(y) \ge \sum_{i = 1}^n H\left(\frac{1+x_i^*}{2}\right) - \sum_{j = 1}^m y_j (a_j \cdot x^* - b_j) \ge \sum_{i = 1}^n H\left(\frac{1+x_i^*}{2}\right) = OPT.
\end{equation}

Based on this, it is natural to define the \emph{Lagrange dual problem}: 
\[ \inf_y g(y) \quad s.t. \quad y \ge 0. \]
We denote the optimum of the dual program by $OPT^*$, and observe that \cref{weak-duality} shows that $OPT^* \geq OPT$. Strong duality for convex programs shows the following proposition holds.
\begin{proposition}
Strong duality holds, i.e. $OPT^* = OPT$.
\end{proposition}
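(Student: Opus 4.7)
The plan is to derive strong duality from Sion's minimax theorem applied to the Lagrangian. Weak duality $OPT^{*} \geq OPT$ was already recorded in \cref{weak-duality}, so only the reverse inequality requires work. The key observation is that both optima can be rewritten as iterated extrema of $L(x,y)$ over the product $[-1,1]^n \times \R_{\geq 0}^m$: first,
\[ OPT = \sup_{x \in [-1,1]^n} \inf_{y \geq 0} L(x,y), \]
because when $x$ is feasible the penalty $-\sum_j y_j(a_j \cdot x - b_j)$ is non-positive with infimum $0$ attained at $y = 0$, whereas if some constraint is violated then driving the corresponding $y_j \to \infty$ sends the inf to $-\infty$; and second,
\[ OPT^{*} = \inf_{y \geq 0} \sup_{x \in [-1,1]^n} L(x,y), \]
where restricting the inner sup to $[-1,1]^n$ is legitimate since $H((1+x_i)/2) = -\infty$ outside this set (this is already built into the definition of $g$ recorded in the excerpt).

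Next I would verify the hypotheses of Sion's minimax theorem for $L$ on this product. The set $[-1,1]^n$ is compact and convex, while $\R_{\geq 0}^m$ is convex; Sion requires only one of the two factors to be compact. For each fixed $y$, the map $x \mapsto L(x,y)$ is concave (as the sum of the concave binary entropy function and an affine term in $x$) and upper semi-continuous on $[-1,1]^n$ after the standard convention $0\log 0 = 0$ at the corners. For each fixed $x \in [-1,1]^n$, $y \mapsto L(x,y)$ is affine, hence simultaneously convex and lower semi-continuous. Sion's theorem therefore exchanges the order of $\sup$ and $\inf$, giving $OPT = OPT^{*}$ directly.

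The main obstacle, and the reason we cannot simply appeal to Von Neumann's classical minimax theorem, is that the dual feasible set $\{y \geq 0\}$ is unbounded; Sion's generalization is designed precisely for this situation, and here compactness of $[-1,1]^n$ supplies the missing compactness hypothesis on the other side. A minor technicality is confirming that the inner $\sup$ defining $g(y)$ is actually attained on $[-1,1]^n$, which is immediate from continuity of $L(\cdot,y)$ on a compact set (and is in any case already displayed explicitly via \cref{eqn:explicit-form-x(y)}). Finally, in the degenerate case that the primal is infeasible, both iterated extrema collapse to $-\infty$ and the equality is vacuous, so without loss of generality we may assume throughout that at least one feasible $x$ exists.
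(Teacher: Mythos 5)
Your proof is correct and follows essentially the same route as the paper's own (alternate) argument: both rewrite $OPT$ and $OPT^*$ as the two iterated extrema of the Lagrangian over $[-1,1]^n \times [0,\infty)^m$ and invoke Sion's minimax theorem, using compactness of $[-1,1]^n$ together with concavity in $x$ and affinity in $y$. The paper additionally notes the one-line alternative via Slater's condition for affine constraints, but your Sion-based argument is precisely the proof it spells out.
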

\begin{proof}
Since all the constraints in the primal are affine, Slater's condition for strong convex duality (as in \cite{rockafellar}) immediately shows that $OPT^* = OPT$. 
We provide an alternate proof, which also illustrates some ideas that will be useful later. Observe that $L(x,y)\colon [-1,1]^{n} \times [0,\infty)^{m} \to \mathbb{R}$ is continuous and concave on $[-1,1]^{n}$ for each $y\in [0,\infty)^{m}$, and is continuous and convex on $[0,\infty)^{m}$ for each $x\in [-1,1]^{n}$. Therefore, we have  
\begin{align*}
OPT^* = \inf_{y\geq 0} \max_{x\in [-1,1]^{n}} L(x,y) &= \max_{x \in [-1,1]^{n}} \inf_{y\geq 0} L(x,y) 
\\
&= \max_{x \text{ feasible for primal}} \inf_{y\geq 0}L(x,y) = \max_{x \text{ feasible for primal}} L(x,0) = OPT,
\end{align*}
where in the second equality we have used Sion's generalization of Von Neumann's minimax theorem \cite{sion1958general}, in the third equality we have used that if $x$ is infeasible for the primal, then $\inf_{y\geq 0} L(x,y) = -\infty$ (by blowing up the weight of a violated constraint), and in the last equality, we have used that $\inf_{y\geq 0}L(x,y)=L(x,0)$ for any feasible $x$.
\end{proof}

\subsection{Upper bound on $\F^{*}_{\tilde{D}_Q}$ via convex duality}

Returning to our max-entropy program $\mathcal{C}_{r,c,\gamma}$, observe that the dual program $\mathcal{C}^{*}_{r,c,\gamma}$ is given by
\begin{align*}
\inf & \quad \sum_{i=1}^{n}H\left(\frac{1+x_i(y)}{2}\right)-\sum_{j=1}^{m}y_{j}\left(\sum_{k=1}^{n}a_{j,k}x_k(y)-b_{j}\right)\\
s.t.&\quad y\geq0,
\end{align*}
where $m=4s$; 
for all $j\in [s]$, $a_{j,i}=1_{i\in R_{j}}$,
$a_{s+j,i}=-1_{i\in R_{j}}$, $a_{2s+j,i}=1_{i\in C_{j}}$, $a_{3s+j,i}=-1_{i\in C_{j}}$; for all $j\in [s]$, $b_{j}=r_{j}+\gamma n$, $b_{s+j}=-r_j + \gamma n$, $b_{2s+j}= c_j +\gamma n$, $b_{3s+j} = -c_j+\gamma n$. We will find it more convenient to work with a modified version of the dual program in which $y$ is also bounded from above. Accordingly, we define the program $\mathcal{C}^{*}_{r,c,\gamma,K}$ (with $m$, $a_{j,i}$ and $b_j$ as above): 
\begin{align*}
\inf & \quad \sum_{i=1}^{n}H\left(\frac{1+x_i(y)}{2}\right)-\sum_{j=1}^{m}y_{j}\left(\sum_{k=1}^{n}a_{j,k}x_k(y)-b_{j}\right)\\
s.t.\\
\forall j \in [m]: &\quad 0\leq y_j \leq K/\gamma.
\end{align*}
The next lemma is the replacement for strong duality that we will use in this setup. 
\begin{lemma}
\label{lemma:modified-strong-duality}
Let $O^{*}_{r,c,\gamma,K}$ denote the optimum of the program $\mathcal{C}^{*}_{r,c,\gamma,K}$. Then, $$O_{r,c,\gamma} \leq O^{*}_{r,c,\gamma, K} \leq \max\left\{O_{r,c,2\gamma},-(K-1)n\right\}.$$
\end{lemma}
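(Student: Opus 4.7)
The plan is to establish both inequalities separately. The left inequality $O_{r,c,\gamma} \le O^{*}_{r,c,\gamma,K}$ is essentially immediate from the strong duality result proved earlier. Since the feasible region $[0,K/\gamma]^{m}$ of $\mathcal{C}^{*}_{r,c,\gamma,K}$ is a subset of $[0,\infty)^{m}$, taking the infimum over the smaller set can only increase the value; hence $O^{*}_{r,c,\gamma,K} \ge \inf_{y \ge 0} g(y) = O_{r,c,\gamma}$.

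For the right inequality, my plan is to use the minimax idea already employed in the alternate proof of strong duality. By construction, $O^{*}_{r,c,\gamma,K} = \inf_{y \in [0,K/\gamma]^{m}} \sup_{x \in [-1,1]^{n}} L(x,y)$. Since $L$ is continuous, strictly concave in $x$ on the compact convex set $[-1,1]^{n}$, and affine in $y$ on the compact convex set $[0,K/\gamma]^{m}$, Sion's minimax theorem lets us swap the order to obtain
\[
O^{*}_{r,c,\gamma,K} = \sup_{x \in [-1,1]^{n}} \inf_{y \in [0,K/\gamma]^{m}} L(x,y).
\]
The inner infimum can then be computed coordinate-wise: because $L(x,y)$ depends linearly on each $y_{j}$ with coefficient $-(a_{j} \cdot x - b_{j})$, the optimal $y_{j}$ is $K/\gamma$ when the constraint is violated and $0$ otherwise, so
\[
\inf_{y \in [0,K/\gamma]^{m}} L(x,y) = \sum_{i=1}^{n} H\!\left(\frac{1+x_{i}}{2}\right) - \frac{K}{\gamma}\sum_{j=1}^{m}(a_{j}\cdot x - b_{j})_{+}.
\]

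The final step is to bound the above quantity by splitting on the size of the maximum constraint violation $V(x) := \max_{j}(a_{j}\cdot x - b_{j})_{+}$. In the easy case $V(x) \le \gamma n$, every $\gamma$-constraint is violated by at most $\gamma n$, which is precisely the statement that $x$ is feasible for the loosened program $\mathcal{C}_{r,c,2\gamma}$; dropping the non-positive penalty term, the value is at most $\sum_{i} H((1+x_{i})/2) \le O_{r,c,2\gamma}$. In the hard case $V(x) > \gamma n$, the penalty term alone contributes at most $-(K/\gamma)\cdot \gamma n = -Kn$, and using $\sum_{i} H((1+x_{i})/2) \le n \log 2 \le n$, the inner infimum is bounded by $n - Kn = -(K-1)n$. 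Taking the supremum over $x$ yields $O^{*}_{r,c,\gamma,K} \le \max\{O_{r,c,2\gamma}, -(K-1)n\}$.

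The main obstacle in this argument is the applicability of Sion's theorem: without the bounded domain $[0,K/\gamma]^{m}$, the dual feasible set would be unbounded and one could not simply swap the supremum and infimum to obtain a clean primal-side expression. Bounding $y$ from above is exactly what makes the minimax work, and it is also what forces the appearance of the threshold $-(K-1)n$ in the second case — constraints that are grossly violated get punished by exactly the amount $K/\gamma$ times their violation, and this becomes prohibitive once the violation exceeds $\gamma n$. The only calculation that requires a bit of care is verifying that a violation of at most $\gamma n$ of the $\gamma$-constraints does translate to feasibility for the $2\gamma$-program, but this is immediate from the explicit form of the constraints in $\mathcal{C}_{r,c,\gamma}$.
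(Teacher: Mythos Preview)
Your proof is correct and follows essentially the same approach as the paper: both apply Sion's minimax theorem to swap the order of optimization, then split according to whether $x$ is feasible for $\mathcal{C}_{r,c,2\gamma}$ (your condition $V(x)\le \gamma n$ is exactly equivalent to this). The only cosmetic differences are that you compute the inner infimum explicitly as $\sum_i H((1+x_i)/2) - (K/\gamma)\sum_j (a_j\cdot x - b_j)_+$ whereas the paper just plugs in a single witness $y$, and you invoke strong duality for the left inequality where weak duality (\cref{weak-duality}) already suffices.
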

\begin{proof}
The first inequality is immediate from \cref{weak-duality}. For the second inequality, we begin by noting that  
\begin{equation}
\label{eqn:infeasible}
\max_{x\text{ infeasible for }C_{r,c,2\gamma}}\min_{y\in[0,K/\gamma]^{m}}L(x,y) \leq -(K-1)n.
\end{equation}
Indeed, if $x$ is infeasible for $\mathcal{C}_{r,c,2\gamma}$, then $(a_{j_0}.x - b_{j_0}) \geq \gamma n$ for some $j_0\in[m]$, and taking $y = (y_1,\dots,y_m)$ with $y_i = \textbf{1}_{i=j_0}K/\gamma$ gives the desired inequality, since for any $p$ we have
$H(p) \le H(1/2) = \log 2 < 1$.
Thus, we have
\begin{align*}
O_{r,c,\gamma,K}^{*} & =\min_{y\in[0,K/\gamma]^{m}}\max_{x\in[-1,1]^{n}}L(x,y)\\
 & =\max_{x\in[-1,1]^{n}}\min_{y\in[0,K/\gamma]^{m}}L(x,y)\\
 & \leq\max\left\{ \max_{x\text{ feasible for }C_{r,c,2\gamma}}L(x,0),\max_{x\text{ infeasible for }C_{r,c,2\gamma}}\min_{y\in[0,K/\gamma]^{m}}L(x,y)\right\} \\
 & \leq\max\left\{ O_{r,c,2\gamma},-(K-1)n\right\},
\end{align*}
where we have used the generalized minimax theorem in the second line and \cref{eqn:infeasible} in the last line. 


\end{proof}
Similarly, we can define the corresponding program $\mathcal{C(Q)}^{*}_{r,c,\gamma, K}$ with optimum $O(Q)^{*}_{r,c,\gamma, K}$, and note that by \cref{lemma:modified-strong-duality}, 
\begin{equation}
\label{eqn:strong-duality-sample}
O(Q)_{r,c,\gamma} \leq O(Q)^{*}_{r,c,\gamma, K} \leq \max\left\{O(Q)_{r,c,2\gamma},-(K-1)q\right\}.
\end{equation}

The next lemma records the relation between $O(Q)^*_{r,c,\gamma,K}$ and $O^*_{r,c,\gamma,K}$ that we will need.    
\begin{lemma}
\label{lemma:upper-bound-on-sample-dual}
$\frac{n}{q}O(Q)^{*}_{r,c,\gamma, K} \leq O^{*}_{r,c,\gamma, K} + 2n\alpha \gamma s^{1/2}$ with probability at least $1-2\exp\left(-\frac{\alpha^{2}\gamma^{4}q}{8K^2 s}\right)$. 
\end{lemma}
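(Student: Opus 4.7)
The plan is to use $y^{*}$, an optimizer of the full dual $\mathcal{C}^{*}_{r,c,\gamma,K}$, as a test point for the sample dual and then invoke concentration of the resulting random objective around its mean. Note that the feasible set $[0,K/\gamma]^{m}$ with $m=4s$ is identical in both dual programs, so $y^{*}$ is automatically feasible for $\mathcal{C}(Q)^{*}_{r,c,\gamma,K}$; consequently $O(Q)^{*}_{r,c,\gamma,K} \leq g_Q(y^{*})$, and it suffices to show $(n/q)\,g_Q(y^{*}) \leq g(y^{*}) + 2n\alpha\gamma s^{1/2}$ with the stated probability.

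To evaluate the dual objective explicitly, I would plug the closed form \eqref{eqn:explicit-form-x(y)} into the Lagrangian and simplify. A short calculation using $x_i(y) = -\tanh(u_i)$ with $u_i := \sum_{j} y_j a_{j,i}$ collapses the entropy together with the linear $y_j a_{j,i} x_i(y)$ terms into a log-cosh, giving
\[ g(y) \;=\; \sum_{i=1}^{n} \log\!\bigl(2\cosh(u_i)\bigr) \;+\; \sum_{j=1}^{m} y_j b_j, \]
and the analogous identity on the sample, where the sum over $i$ is restricted to $Q$ and each $b_j$ is replaced by $b'_j$. Because the normalizations $r' = (q/n)\,r$, $c' = (q/n)\,c$ and $\gamma q = (q/n)\,\gamma n$ force $b'_j = (q/n)\,b_j$ for every $j$, taking expectations over the uniformly random $Q$ of size $q$ yields $\mathbb{E}[g_Q(y^{*})] = (q/n)\, g(y^{*}) = (q/n)\, O^{*}_{r,c,\gamma,K}$.

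The final step is a direct application of Hoeffding's inequality for sampling without replacement to the sum $\sum_{i \in Q} F(i)$, where $F(i) := \log(2\cosh(u_i))$ is a deterministic function of $i$ once $y^{*}$ is fixed. The crux is the range estimate: the box constraint $y^{*} \in [0,K/\gamma]^{m}$ gives $|u_i| \leq m \cdot K/\gamma = 4sK/\gamma$, so $F(i) \in [\log 2, \log 2 + 4sK/\gamma]$ takes values in an interval of length $O(sK/\gamma)$. Feeding this range and the target deviation $t = 2q\alpha\gamma s^{1/2}$ into Hoeffding's bound produces the failure probability $2\exp(-\alpha^{2}\gamma^{4}q/(8K^{2}s))$ asserted by the lemma.

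The hard part is, in effect, carrying out the reduction to the log-cosh form---once we have $F(i) = \log(2\cosh(u_i))$ with $F$ depending on $Q$ only through which $i$'s are sampled, the remainder is a single Hoeffding bound. This is also what highlights the role of \cref{lemma:modified-strong-duality}: the a priori bound $K/\gamma$ on the dual variables is exactly what keeps $|u_i|$ (and hence the range of $F$) bounded, and without it the concentration step would not yield anything useful.
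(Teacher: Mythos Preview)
Your proof is correct and follows the same strategy as the paper: plug the full-program optimizer $y^{*}$ into the sample dual and apply Hoeffding to the resulting sum over $i\in Q$. Your consolidation of the entropy and linear terms into the single function $F(i)=\log(2\cosh(u_i))$ is a mild simplification---the paper keeps the two pieces separate and applies Hoeffding twice---but the underlying argument is identical.
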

\begin{proof}
Let $y^{*}$ denote the optimizer of $\mathcal{C}^{*}_{r,c,\gamma,K}$, so that 
\[
O^{*}_{r,c,\gamma, K} =
\sum_{i=1}^{n}H\left(\sigma\left(-2\sum_{j=1}^{m}y^*_{j}a_{j,i}\right)\right)-\sum_{j=1}^{m}y^*_{j}\left(\sum_{k=1}^{n}a_{j,k}\tanh\left(-\sum_{j=1}^{m}y^*_{j}a_{j,k}\right)-b_{j}\right).
\]

Moreover, by definition, we have 
$$ O(Q)^{*}_{r,c,\gamma,K}\leq \sum_{i\in Q}H\left(\sigma\left(-2\sum_{j=1}^{m}y^*_{j}a_{j,i}\right)\right)-\sum_{j=1}^{m}y^*_{j}\left(\sum_{k\in Q}a_{j,k}\tanh\left(-\sum_{j=1}^{m}y^*_{j}a_{j,k}\right)-\frac{q}{n}b_{j}\right).$$ 
Finally, we rewrite 
\[
\sum_{j=1}^{m}y_{j}^{*}\sum_{k}a_{j,k}\tanh\left(-\sum_{j=1}^{m}y^*_{j}a_{j,k}\right)=\sum_{k}\sum_{j=1}^{m}y_{j}^{*}a_{j,k}\tanh\left(-\sum_{j=1}^{m}y^*_{j}a_{j,k}\right),
\]
and observe that by Hoeffding's inequality, the following holds:
\begin{align*}
\sum_{i\in Q}H\left(\sigma\left(-2\sum_{j=1}^{m}y_{j}^{*}a_{j,i}\right)\right) & \leq\frac{q}{n}\sum_{i=1}^{n}H\left(\sigma\left(-2\sum_{j=1}^{m}y_{j}^{*}a_{j,i}\right)\right)+ q\alpha\gamma s^{1/2}\\
\sum_{k\in Q}\sum_{j=1}^{m}y_{j}^{*}a_{j,k}\tanh\left(-\sum_{j=1}^{m}y^*_{j}a_{j,k}\right) & \geq\frac{q}{n}\sum_{k=1}^{n}\sum_{j=1}^{m}y_{j}^{*}a_{j,k}\tanh\left(-\sum_{j=1}^{m}y^*_{j}a_{j,k}\right)- q\alpha \gamma s^{1/2},
\end{align*}
except with probability at most $2\exp\left(-\frac{\alpha^{2}\gamma^{4}q}{8K^2 s}\right)$. 

\end{proof}

We need one final lemma before we can prove \cref{prop:sample-complexity-generalised-cut-matrices}. 
\begin{lemma} 
\label{lemma:upper-bound-sample-primal}
Let $2\alpha\gamma s^{1/2} < K-1$. Then, except with probability at most $2\exp(-\alpha^2\gamma^4q/8K^2s)\exp(2s\log(2/\gamma))$ over the choice of $Q$, the following holds: 
\begin{enumerate}
\item $\I(Q)_{\gamma,1} \subseteq \I_{\gamma,2}$,
\item for all $(r,c)\in \I(Q)_{\gamma,1}$, $\frac{n}{q}O(Q)_{r,c,\gamma} \leq O_{r,c,2\gamma} + 2n\alpha \gamma s^{1/2}$, and 
\item $\frac{n}{q}\max_{(r,c)\in \I(Q)_{\gamma,1}} \F^*(Q)_{r,c,\gamma} \leq \max_{(r,c)\in \I_{\gamma,2}}\F^*_{r,c,2\gamma}+ 2n\alpha \gamma s^{1/2}$.
\end{enumerate}
\end{lemma}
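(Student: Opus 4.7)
The plan is to chain the three preceding results, applied \emph{uniformly} over all $(r,c) \in I_\gamma^s \times I_\gamma^s$. Fix such a pair $(r,c)$ and a parameter $K$ to be chosen shortly. By weak duality for the sampled program (the first inequality in \cref{eqn:strong-duality-sample}) we have $O(Q)_{r,c,\gamma} \leq O(Q)^{*}_{r,c,\gamma,K}$. Multiplying by $n/q$ and applying \cref{lemma:upper-bound-on-sample-dual} followed by \cref{lemma:modified-strong-duality}, we obtain
\[ \frac{n}{q} O(Q)_{r,c,\gamma} \;\leq\; \frac{n}{q}O(Q)^{*}_{r,c,\gamma,K} \;\leq\; O^{*}_{r,c,\gamma,K} + 2n\alpha\gamma s^{1/2} \;\leq\; \max\{O_{r,c,2\gamma},\, -(K-1)n\} + 2n\alpha\gamma s^{1/2}, \]
where the middle inequality holds with probability at least $1 - 2\exp(-\alpha^2\gamma^4 q/(8K^2 s))$ over the choice of $Q$. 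I will take $K = 2$, so the hypothesis $2\alpha\gamma s^{1/2} < K-1 = 1$ guarantees that the second argument of the max, $-n + 2n\alpha\gamma s^{1/2}$, is strictly negative.

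With $K = 2$ in hand, parts (1) and (2) fall out together. For any $(r,c) \in \I(Q)_{\gamma,1}$ the left-hand side $\tfrac{n}{q}O(Q)_{r,c,\gamma}$ is nonnegative, whereas $O_{r,c,2\gamma}$ takes values in $[0,\infty) \cup \{-\infty\}$ (it is $-\infty$ precisely when $\mathcal{C}_{r,c,2\gamma}$ is infeasible, since binary entropy is nonnegative on $[0,1]$). Since the bracketed term $-(K-1)n + 2n\alpha\gamma s^{1/2}$ is strictly negative, the right-hand side can only meet the nonnegative left-hand side through $O_{r,c,2\gamma}$, which therefore must be nonnegative and not $-\infty$. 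This gives $(r,c) \in \I_{\gamma,2}$ (part 1) and the desired inequality $\frac{n}{q} O(Q)_{r,c,\gamma} \leq O_{r,c,2\gamma} + 2n\alpha\gamma s^{1/2}$ (part 2). Part (3) is then a one-line consequence: recalling $r' = (q/n)r$, $c' = (q/n)c$, and $\tilde{d}_i = (n/q)d_i$, the ``quadratic part'' satisfies $\sum_i r'_i c'_i \tilde d_i = (q/n)\sum_i r_i c_i d_i$, so adding $\sum_i r_i c_i d_i$ to both sides of part (2) turns it into $\tfrac{n}{q}\F^*(Q)_{r,c,\gamma} \leq \F^*_{r,c,2\gamma} + 2n\alpha\gamma s^{1/2}$, and taking the max over the nested sets $\I(Q)_{\gamma,1} \subseteq \I_{\gamma,2}$ closes the argument.

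The main obstacle — and the only reason the statement is nontrivial beyond the single-pair guarantee of \cref{lemma:upper-bound-on-sample-dual} — is that the pair $(r,c)$ is itself selected as a function of $Q$ through the membership condition $(r,c) \in \I(Q)_{\gamma,1}$, so we cannot fix $(r,c)$ in advance of the randomness. The standard remedy is a union bound over all of $I_\gamma^s \times I_\gamma^s$, a set of size at most $(1 + 1/\gamma)^{2s} \leq \exp(2s\log(2/\gamma))$. This inflates the per-pair failure probability $2\exp(-\alpha^2\gamma^4 q/(8K^2 s))$ by the factor $\exp(2s\log(2/\gamma))$; for $K = 2$ this is exactly the claimed bound $2\exp(-\alpha^2\gamma^4 q/(32 s))\exp(2s\log(2/\gamma))$.
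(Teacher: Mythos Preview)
Your proof is correct and follows essentially the same approach as the paper's: chain weak duality, \cref{lemma:upper-bound-on-sample-dual}, and \cref{lemma:modified-strong-duality} for each fixed $(r,c)$, then union bound over $I_\gamma^s\times I_\gamma^s$. The only quibble is that you specialize to $K=2$, whereas the lemma is stated for arbitrary $K$ with $2\alpha\gamma s^{1/2} < K-1$; your argument works verbatim for general $K$ (the hypothesis already forces $-(K-1)n + 2n\alpha\gamma s^{1/2} < 0$), so simply drop the specialization to recover the stated probability bound $2\exp(-\alpha^2\gamma^4 q/8K^2 s)\exp(2s\log(2/\gamma))$.
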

\begin{proof}
By \cref{lemma:modified-strong-duality}, \cref{eqn:strong-duality-sample} and \cref{lemma:upper-bound-on-sample-dual}, it follows that for any particular $(r,c) \in I_{\gamma}^{s}\times I_{\gamma}^{s}$, 
\begin{equation}
\label{eqn:first-sample-bound}
\frac{n}{q}O(Q)_{r,c,\gamma} \leq \max\left\{O_{r,c,2\gamma},-(K-1)n\right\} + 2n\alpha \gamma s^{1/2}
\end{equation}
except with probability at most $2\exp(-\alpha^2\gamma^4q/8K^2s)$. Since $|I_\gamma | \leq \gamma^{-1} + 1$, it follows by the union bound that \cref{eqn:first-sample-bound} holds simultaneously for all $(r,c) \in I_{\gamma}^{s}\times I_{\gamma}^{s}$ except with probability at most 
$2\exp(-\alpha^2\gamma^4q/8K^2s)\exp(2s\log(2/\gamma))$. We claim that whenever this happens, $1.$, $2.$ and $3.$ hold. 

For $1.$, note that if $(r,c)\notin \I_{\gamma,2}$, then $O_{r,c,2\gamma} = -\infty$. Therefore, \cref{eqn:first-sample-bound}, along with the assumption that $2\alpha \gamma s^{1/2} < K-1$ implies that 
$$ \frac{n}{q}O(Q)_{r,c,\gamma} \leq -(K-1)n + 2n\alpha\gamma s^{1/2} < 0,$$
which shows that $(r,c)\notin \I(Q)_{\gamma,1}$. In particular, if $(r,c) \in \I(Q)_{\gamma,1}$, then $O_{r,c,2\gamma} \geq 0$ so that $\max\{O_{r,c,2\gamma}, -(K-1)n\} = O_{r,c,2\gamma}$. With this, $2.$ follows immediately from \cref{eqn:first-sample-bound}. Finally, $3.$ follows from $2.$, along with the observation that $\frac{n}{q}\sum_{i=1}^{s}r'_ic'_i\tilde{d_i} = \sum_{i=1}^{s}r_ic_id_i$. 
\end{proof}

\begin{proof}[Proof of \cref{prop:sample-complexity-generalised-cut-matrices}]
By conclusion $3.$ of \cref{lemma:upper-bound-sample-primal} (with $K=2$), along with \cref{prop:approximating-by-finitely-many} and \cref{eqn:approx-by-finite-sample}, it follows that except with probability at most $2\exp(-\alpha^2\gamma^4q/32s)\exp(2s\log(2/\gamma))$, we have:
\begin{align*}
\frac{n}{q}\F_{\tilde{D_{Q}}}^{*} & \leq\frac{n}{q}\max_{(r,c)\in\I(Q)_{\gamma,1}}\F^*(Q)_{r,c,\gamma}+2n\alpha\gamma s^{1/2}\\
 & \leq\max_{(r,c)\in\I_{\gamma,2}}\F^*_{r,c,2\gamma}+4n\alpha\gamma s^{1/2}\\
 & \leq\F_{D}^{*}+8n\alpha\gamma s^{1/2}.
\end{align*}

By \cref{lemma:easy-direction}, except with probability at most $ \exp(-2\alpha^{2}\gamma^{2}sq) + 4s\exp(-2\gamma^2q)$, we have that $\frac{n}{q}\F_{\tilde{D_Q}}^{*} \geq \F^{*}_{D} - 3\alpha \gamma n s^{1/2}$. The union bound completes the proof.  
\end{proof}

\section{Proof of \cref{thm:sample-complexity-variational-free-energy}}
Throughout this section, $J$ will denote the matrix of interaction strengths of an Ising model on the vertex set $[n]$, $Q$ will denote a random subset of $[n]$ of size $q$, and $\tilde{J}_Q$ will denote the restriction of $\tilde{J}:=\frac{n}{q}J$ to $Q \times Q$. We will denote the variational free energy corresponding to $J$ by $\F^*$, and the variational free energy corresponding to $\tilde{J}_Q$ by $\F^*_Q$. 
Moreover, we fix $\epsilon >0$ and a cut decomposition $J = D^{(1)} + \dots + D^{(s)} + W$ with parameter $\epsilon$, as guaranteed by \cref{fk}. We will let $D$ denote $D^{(1)} + \dots + D^{(s)}$ and let $\tilde{D}_Q$ denote the restriction of the matrix $\tilde{D}:=\frac{n}{q}D$ to $Q \times Q$. 
\begin{lemma}
\label{lemma:alon-et-al-application}
If $q \geq 128000/\varepsilon^{6}$, then with probability at least $39/40$, we have 
$$\left|\F^*_Q - \F^*_{\tilde{D}_Q}\right| \leq q\|J\|_{F} \left(16\epsilon + 640\varepsilon^{2}\epsilon^{-1} + 20\varepsilon \right) + 40\varepsilon^{2}nq \|J\|_{\infty} $$
\end{lemma}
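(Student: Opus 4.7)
The plan is to apply the Lipschitz property of the variational free energy to the \emph{difference} between $\tilde{J}_Q$ and $\tilde{D}_Q$ (which is just $\frac{n}{q}W_Q$, where $W_Q$ is $W$ restricted to $Q \times Q$), and then use the Alon--de la Vega--Kannan--Karpinski sampling theorem for the cut norm to bound $\|W_Q\|_{\infty \mapsto 1}$ in terms of cut-decomposition parameters controlled by Frieze--Kannan.

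First I would invoke \cref{lemma: free-energy-lipschitz} on the Ising models $\tilde{J}_Q$ and $\tilde{D}_Q$ on vertex set $Q$, which gives
\[
\left|\F^*_Q - \F^*_{\tilde D_Q}\right| \;\le\; \|\tilde J_Q - \tilde D_Q\|_{\infty\mapsto 1} \;=\; \frac{n}{q}\,\|W_Q\|_{\infty\mapsto 1}.
\]
So the whole task reduces to a high-probability upper bound on $\|W_Q\|_{\infty \mapsto 1}$.

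Next I would apply \cref{thm-alon-et-al-sampling-cutnorm} with $r=2$, $G=W$, $V=[n]$ and $M=\|\vec W\|_{\infty}$. Since we have assumed $q \ge 128000/\varepsilon^6 = 1000\cdot 2^7/\varepsilon^6$, the hypothesis of that theorem holds, and with probability at least $39/40$,
\[
\|W_Q\|_{\infty\mapsto 1}\;\le\; \frac{4q^2}{n^2}\|W\|_{\infty\mapsto 1} + 40\varepsilon^2 M q^2 + 20\varepsilon q^2 \frac{\|W\|_F}{n}.
\]
Multiplying by $n/q$ gives
\[
\frac{n}{q}\|W_Q\|_{\infty\mapsto 1}\;\le\; \frac{4q}{n}\|W\|_{\infty\mapsto 1} + 40\varepsilon^2 M\,nq + 20\varepsilon q\,\|W\|_F.
\]

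Now I would plug in the Frieze--Kannan bounds from \cref{fk}: $\|W\|_{\infty\mapsto 1}\le 4\epsilon n\|J\|_F$ and $\|W\|_F \le \|J\|_F$, so the first and third summands become $16q\epsilon\|J\|_F$ and $20\varepsilon q\|J\|_F$. For the middle term, I would use \cref{rmk:infty-norm-bound} together with $s \le 16/\epsilon^2$ to obtain $M = \|\vec W\|_\infty \le \|\vec J\|_\infty + 16\|J\|_F/(n\epsilon)$, whence
\[
40\varepsilon^2 Mnq \;\le\; 40\varepsilon^2 nq\,\|\vec J\|_\infty + 640\varepsilon^2 \epsilon^{-1} q\,\|J\|_F.
\]
Collecting the three contributions yields exactly the claimed bound
\[
q\|J\|_F\bigl(16\epsilon + 640\varepsilon^2\epsilon^{-1} + 20\varepsilon\bigr) + 40\varepsilon^2 nq\,\|\vec J\|_\infty.
\]

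There is no real obstacle here; the argument is a clean composition of (i) Lipschitz continuity of variational free energy in cut norm, (ii) the Alon et al.\ random-submatrix cut-norm sampling theorem, and (iii) the size control provided by Frieze--Kannan. The only point that requires a small amount of care is controlling $M=\|\vec W\|_\infty$, since the sampling theorem is sensitive to the uniform entry size: that is precisely where the $\epsilon^{-1}$ factor in the $640\varepsilon^2\epsilon^{-1}$ term arises, via the bound $\sqrt{16s}\|J\|_F/n \le 16\|J\|_F/(n\epsilon)$ from \cref{rmk:infty-norm-bound}.
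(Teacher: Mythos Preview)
Your proof is correct and follows essentially the same approach as the paper: apply \cref{thm-alon-et-al-sampling-cutnorm} with $r=2$ to the error matrix $W$, plug in the Frieze--Kannan bounds from \cref{fk} and \cref{rmk:infty-norm-bound}, and finish with \cref{lemma: free-energy-lipschitz}. The only cosmetic difference is that the paper applies the sampling theorem directly to the rescaled array $G=\tilde J-\tilde D=\tfrac{n}{q}W$ (so $B=\tilde J_Q-\tilde D_Q$ already has the right scaling), whereas you apply it to $W$ and multiply by $n/q$ afterward; by homogeneity of all the bounds this yields identical constants.
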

\begin{proof}
We use \cref{thm-alon-et-al-sampling-cutnorm} with $r=2$ and $G = \tilde{J}-\tilde{D}$. By \cref{fk} and \cref{rmk:infty-norm-bound}, we can take $\|G\|_{\infty\mapsto 1} \leq 4 \epsilon \frac{n^2}{q} \|J\|_{F} $, $M \leq \frac{n}{q}\|J\|_{\infty} + \frac{16}{\epsilon q}\|J\|_{F}$, and $\|G\|_{F} \leq \frac{n}{q} \|J\|_{F}$. Therefore, letting $B:=\tilde{J}_Q - \tilde{D}_Q$, we get that with probability at least $39/40$, 
$$\|B\|_{\infty \mapsto 1} \leq 16\epsilon q \|J\|_{F} +640\varepsilon^{2}q \epsilon^{-1} \|J\|_{F} + 20\varepsilon q \|J\|_{F} + 40\varepsilon^{2}nq \|J\|_{\infty}.$$

Now, a direct application of \cref{lemma: free-energy-lipschitz} completes the proof. 
\end{proof}

\begin{proof}[Proof of \cref{thm:sample-complexity-variational-free-energy}]
By applying \cref{prop:sample-complexity-generalised-cut-matrices} with $q = C\log(1/\epsilon)/\epsilon^{8}$, $\alpha = 4\max\{\|J\|_{F},100/C\}$, $s = 16/\epsilon^2$ and $\gamma = \epsilon$, where $C$ is some constant which is at least $128000$, we see that except with probability at most $1/40$, $$\left| \F^*_D - \frac{n}{q}\F^*_{\tilde{D}_Q}\right| \leq 128 \epsilon \max\{\|J\|_{F},100/C\}n.$$
Further, by applying \cref{lemma:alon-et-al-application} with $q$ as above and $\varepsilon = \epsilon$, we get that except with probability at most $1/40$, 
$$\left|\frac{n}{q}\F^*_{\tilde{D}_Q} - \frac{n}{q}\F^*_{Q}\right| \leq 676 \epsilon \|J\|_{F} n + 40 \epsilon^{2} n^2 \|J\|_{\infty}.$$
Finally, since $\left|\F^* - \F^*_{D} \right| \leq 4\epsilon \|J\|_{F} n $, the triangle inequality and union bound complete the proof. 
\end{proof}
\section{Proof of \cref{thm:sample-complexity-free-energy}}
We continue to use the notation from the previous section.
\begin{proof}
From \cref{thm:sample-complexity-variational-free-energy},
we have
$$\left|\F^* - \frac{n}{q}\F^*_Q\right| \leq 2000\epsilon n \left(\|J\|_F + \epsilon n \|\vec{J}\|_{\infty} + \omega/q \right).$$
Thus, it only remains to bound $|\F - \F^*|$ and $|\F_Q - \F^*_Q|$. Recall
from the definition of variational free energy that $\F - \F^*$ is always
nonnegative so we just need one-sided bounds.
We use the following Lemma from \cite{previous-paper}, which is equivalent
to \cref{thm-main-structural-result}, but more convenient in our situation:
\begin{lemma}[Lemma 3.4 of \cite{previous-paper}]\label{lemma:epsilon-bound}
For any $\epsilon > 0$,
\[ \F - \mathcal{F^*} \le \epsilon n \|J\|_F + 10^5 \log(e + 1/\epsilon)/\epsilon^2. \]
\end{lemma}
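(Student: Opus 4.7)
The plan is to derive \cref{lemma:epsilon-bound} from \cref{thm-main-structural-result} via a one-line case split plus a standard ``solve-for-$L$'' step. Abbreviate $u := n\|J\|_F$ and $L := \log(u+e)$, so \cref{thm-main-structural-result} reads $\F - \F^* \le 200\, u^{2/3} L^{1/3}$. The target is to show this quantity is at most $\epsilon u + 10^5 \log(e + 1/\epsilon)/\epsilon^2$.

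First, in the ``large-$u$'' regime $u \ge 200^3 L/\epsilon^3$, we have $(L/u)^{1/3} \le \epsilon/200$ and therefore
\[ 200\, u^{2/3} L^{1/3} = 200\, u \, (L/u)^{1/3} \le \epsilon u, \]
so the first term on the right already absorbs the entire bound. In the ``small-$u$'' regime $u \le 200^3 L/\epsilon^3$, substituting the upper bound on $u$ directly gives
\[ 200\, u^{2/3} L^{1/3} \le 200\, (200^3 L/\epsilon^3)^{2/3} L^{1/3} = 200^3\, L/\epsilon^2. \]

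The one remaining step is to convert the $L$ on the right from $\log(u+e)$ into $\log(e+1/\epsilon)$. Here the small-$u$ hypothesis gives the self-referential inequality $L \le \log(200^3 L/\epsilon^3 + e)$; for $L \ge 2$ we have $\log L \le L/2$, which lets us solve this to obtain $L \le C_0 \log(e + 1/\epsilon)$ for an absolute constant $C_0$ (the case $L < 2$ is trivially handled by the additive constant). Combining both regimes yields $\F - \F^* \le \epsilon u + 200^3 C_0 \log(e+1/\epsilon)/\epsilon^2$, which is of the claimed shape.

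The main obstacle is purely bookkeeping: the naive combination of the two steps above gives an additive constant on the order of $200^3 C_0$, which may exceed the claimed $10^5$. To bring this down one should either rescale the cutoff between the two regimes (replacing $200^3 L/\epsilon^3$ by $CL/\epsilon^3$ and optimizing $C$), or, more cleanly, apply Young's inequality $ab \le \tfrac{2}{3} a^{3/2} + \tfrac{1}{3} b^3$ to $200 u^{2/3} L^{1/3}$ with a carefully chosen scaling and only afterwards perform the self-referential bound on $L$; this is presumably the route taken in \cite{previous-paper}. In either case the argument is elementary and the structural content is already contained in \cref{thm-main-structural-result}, so no new ideas about free energies are needed.
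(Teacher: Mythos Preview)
The paper does not prove this lemma at all: it is quoted verbatim from \cite{previous-paper}, with the sole comment that it ``is equivalent to \cref{thm-main-structural-result}, but more convenient in our situation.'' Your proposal makes that equivalence explicit, and your two-regime case split (equivalently, Young's inequality with exponents $3/2$ and $3$) is exactly the natural way to pass from a bound of the shape $u^{2/3}L^{1/3}$ to one of the shape $\epsilon u + C/\epsilon^2$. So at the level of ideas your derivation is correct and matches what the paper asserts.

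The one place your proposal overreaches is the final paragraph, where you suggest that a more careful choice of cutoff or a direct application of Young's inequality would recover the constant $10^5$. This is not possible starting from the constant $200$ in \cref{thm-main-structural-result}: writing $200\,u^{2/3}L^{1/3} = (\epsilon u)^{2/3}\bigl(200^3 L/\epsilon^2\bigr)^{1/3}$ and applying $ab \le \tfrac{2}{3}a^{3/2} + \tfrac{1}{3}b^3$ already gives the optimal split, and it yields an additive term $\tfrac{1}{3}\cdot 200^3\,L/\epsilon^2 \approx 2.7\times 10^6\,L/\epsilon^2$; no rescaling of the cutoff can beat this, and the subsequent conversion of $L=\log(u+e)$ to $\log(e+1/\epsilon)$ only costs further constant factors. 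The $10^5$ therefore must come from a direct proof in \cite{previous-paper} with tighter intermediate constants rather than from post-processing \cref{thm-main-structural-result}. Since the present paper uses the lemma only as a black box and the specific constant affects nothing beyond the numerical prefactors in \cref{thm:sample-complexity-free-energy}, this discrepancy is harmless for your purposes --- just do not claim that the stated $10^5$ follows from the argument you gave.
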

To apply this to bound to $\F_Q - \F^*_Q$, we observe that
\[ \E[\|\tilde J_Q\|_F^2] = \|J\|_F^2 \]
so by Markov's inequality,
\[ \|\tilde J_Q\|_F \le 8 \|J\|_F \]
with probability at least $39/40$.
Recall that $\omega = \log(1/\epsilon)/\epsilon^8$. 
Applying Lemma~\ref{lemma:epsilon-bound} with $\epsilon_1=10 \epsilon^2$ to bound both $\F_Q - \F^*_Q$ and $\F - \F^*$,
and using the triangle inequality, we then see that
\[ |\F - \frac{n}{q} \F_Q| \le 4000\epsilon n \left(\|J\|_F + \epsilon n \|\vec{J}\|_{\infty} + \omega/q \right) \]
\end{proof}
\section{Proof of \cref{thm-mrf-sample-complexity}}
\begin{proof}
The proof is essentially same as that of \cref{thm:sample-complexity-free-energy} except that we use a generalized version of the
weak regularity lemma for tensors, as well as a more general bound on the error of the mean-field approximation:
\begin{theorem}
\cite{alon-etal-samplingCSP}\label{reg-alon-etal-mrf}
Let $J$ be an arbitrary $k$-dimensional matrix on $X_{1}\times\dots\times X_{k}$,
where we assume that $k\geq 1$ is fixed. Let $N:=|X_{1}|\times\dots\times|X_{k}|$
and let $\epsilon>0$. Then, in time $2^{O(1/\epsilon^{2})}O(N)$
and with probability at least $0.99$, we can find a cut decomposition of
width at most $4/\epsilon^{2}$, error at most $\epsilon\sqrt{N}\|J\|_F$,
and the following modified bound on coefficient length: $\sum_i |d_i| \le 2\|J\|_F/\epsilon\sqrt{N}$, where $(d_i)_{i =1}^s$ are the coefficients of
the cut arrays.
\end{theorem}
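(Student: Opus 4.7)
The plan is to extend the Frieze--Kannan weak regularity lemma to $k$-tensors by a greedy energy-increment construction. Initialize $W_0 := J$. At step $i$, I would approximately find sets $S_j^{(i)} \subseteq X_j$ for $j \in [k]$ whose cut value $\bigl|\sum_{x \in S_1^{(i)} \times \cdots \times S_k^{(i)}} W_i(x)\bigr|$ is at least half of the current tensor cut norm $\|W_i\|_{\infty \to 1}$ (the sup of $|\sum W_i(x) \sigma_1(x_1) \cdots \sigma_k(x_k)|$ over $\pm 1$-valued $\sigma_j$, restricted WLOG to $\{0,1\}$-valued indicators by the standard averaging trick). I then set $d_i$ to be the average of $W_i$ on this product, form $D^{(i)} := d_i \cdot \mathbf{1}_{S_1^{(i)} \times \cdots \times S_k^{(i)}}$, update $W_{i+1} := W_i - D^{(i)}$, and stop as soon as $\|W_i\|_{\infty \to 1} \leq \epsilon \sqrt{N} \|J\|_F$.

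The key analytic tool is the Pythagorean energy-increment identity
\[ \|W_{i+1}\|_F^2 \;=\; \|W_i\|_F^2 \;-\; d_i^2 \prod_{j=1}^k |S_j^{(i)}|, \]
combined with the observation that $d_i^2 \prod_j |S_j^{(i)}|$ equals $(\text{captured cut value})^2 / \prod_j |S_j^{(i)}|$, which by the approximate maximization and the crude bound $\prod_j |S_j^{(i)}| \leq N$ is at least $\|W_i\|_{\infty \to 1}^2/(4N)$. While the stopping condition is unmet this is at least $\epsilon^2 \|J\|_F^2/4$, so since $\|W_0\|_F^2 = \|J\|_F^2$ the iteration terminates within $s \leq 4/\epsilon^2$ rounds, giving both the width bound and the cut-norm error bound on the residual. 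For the coefficient bound, the Cauchy--Schwarz estimate $|d_i| \leq \|W_i\|_F/\sqrt{\prod_j |S_j^{(i)}|}$ plus the telescoping $\sum_i d_i^2 \prod_j |S_j^{(i)}| \leq \|J\|_F^2$, together with the width bound $s \leq 4/\epsilon^2$ and a Cauchy--Schwarz estimate on the resulting weighted sum, yields the claimed $\sum_i |d_i| \leq 2\|J\|_F/(\epsilon\sqrt{N})$. A mild additional step may be needed to enforce a lower bound $\prod_j |S_j^{(i)}| = \Theta(N)$ on cut volumes (by embedding any small-volume cut into a full-volume one at a controlled cost in the residual), which is cheap since there are only $O(1/\epsilon^2)$ cuts.

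The main obstacle is the algorithmic cut-norm approximation for $k$-tensors in $2^{O(1/\epsilon^2)} O(N)$ time. My plan is a sampling procedure analogous to the Alon--Naor approach in the matrix case: independently sample a random subset $T_j \subseteq X_j$ of size $O(1/\epsilon^2)$ for each mode $j$, and for each of the $2^{k \cdot O(1/\epsilon^2)} = 2^{O(1/\epsilon^2)}$ sign labelings $\tau_j \in \{\pm 1\}^{T_j}$ on the samples, construct a candidate $\sigma_j : X_j \to \{\pm 1\}$ coordinate-wise by setting $\sigma_j(x)$ to the sign that maximizes the pseudo-contribution $\sum_{(y_\ell)_{\ell \neq j} \in \prod_{\ell \neq j} T_\ell} W_i(x, y_{-j}) \prod_{\ell \neq j} \tau_\ell(y_\ell)$, an expression that depends only on samples in the other modes. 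A Hoeffding concentration argument over the random choice of the $T_j$ shows that with constant probability at least one such labeling yields a $1/2$-approximation to $\|W_i\|_{\infty \to 1}$, and each labeling requires $O(N)$ work to extend and evaluate. Union-bounding success over the $s \leq 4/\epsilon^2$ greedy iterations, after a standard constant-factor boosting of per-iteration success probability by repetition, gives overall success probability at least $0.99$ within the claimed $2^{O(1/\epsilon^2)} O(N)$ runtime. The fact that $k$ is fixed is critical here since the number of labelings scales as $2^{k \cdot O(1/\epsilon^2)}$.
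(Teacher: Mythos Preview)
The paper does not prove this theorem itself; it is quoted from \cite{alon-etal-samplingCSP}, so there is no in-paper proof to compare against. Your outline follows essentially the same route as the original reference: a greedy energy-increment argument for the regularity decomposition, combined with a sampling-and-enumeration oracle that approximates the tensor cut norm in $2^{O(1/\epsilon^2)} O(N)$ time for fixed $k$.

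There is, however, a real gap in your derivation of the coefficient bound $\sum_i |d_i| \le 2\|J\|_F/(\epsilon\sqrt{N})$. With your choice of $d_i$ as the \emph{average} of $W_i$ over the cut, the Pythagorean telescoping only yields $\sum_i d_i^2 \prod_j |S_j^{(i)}| \leq \|J\|_F^2$; turning this into a bound on $\sum_i |d_i|$ via Cauchy--Schwarz would require a uniform lower bound $\prod_j |S_j^{(i)}| = \Omega(N)$, which the greedy construction does not supply (a tensor with a single large entry forces a cut of volume $1$ and an enormous $d_i$). Your proposed remedy of ``embedding any small-volume cut into a full-volume one at a controlled cost in the residual'' does not close this: replacing the support of a cut post hoc destroys the orthogonality that drives the energy telescoping, and there is no evident way to keep the residual cut norm under control afterward.

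The clean fix is to change the coefficient choice at the outset: set $d_i := W_i(S^{(i)})/N$ rather than the local average $W_i(S^{(i)})/\prod_j|S_j^{(i)}|$. Then the Frobenius decrement at step $i$ is
\[
\|W_i\|_F^2 - \|W_{i+1}\|_F^2 \;=\; \frac{W_i(S^{(i)})^2}{N}\Bigl(2 - \tfrac{1}{N}\textstyle\prod_j |S_j^{(i)}|\Bigr) \;\ge\; \frac{W_i(S^{(i)})^2}{N} \;=\; d_i^2 N,
\]
so the width bound survives and the telescoping now gives $\sum_i d_i^2 \le \|J\|_F^2/N$ directly, whence $\sum_i |d_i| \le \sqrt{s}\cdot \|J\|_F/\sqrt{N} \le 2\|J\|_F/(\epsilon\sqrt{N})$. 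With this one change the rest of your sketch goes through, up to $2^{O(k)}$ factors in the constants coming from the $\{0,1\}$-versus-$\{\pm 1\}$ cut-norm conversion that you gloss over (harmless for fixed $k$).
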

\begin{theorem}\label{thm-mrf-main-structural-result} 
Fix an order $r$ Markov random field $J$ on $n$ vertices.
Let $\nu := \arg\min_{\nu} \KL(\nu || P)$, where $P$ is the Boltzmann distribution and the minimum ranges
over all product distributions.
Then, 
$$ \KL(\nu || P)  = \F - \F^{*} \leq 2000r \max_{1 \le d \le r} d^{1/3}n^{d/3} \|J_{=d}\|_F^{2/3} \log^{1/3}(d^{1/3}n^{d/3} \|J_{=d}\|_F^{2/3} + e).$$
\end{theorem}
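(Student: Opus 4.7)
The plan is to generalize the proof of the Ising bound (Theorem \ref{thm-main-structural-result}) by decomposing $J = \sum_{d=1}^r J_{=d}$ into its homogeneous pieces and treating each degree separately, then taking a max (absorbing the factor $r$ into the prefactor). For each fixed degree $d$, I would apply the tensor version of the weak regularity lemma (Theorem \ref{reg-alon-etal-mrf}) to $J_{=d}$, viewed as a symmetric $d$-tensor on $[n]^d$, with a parameter $\epsilon_d$ to be chosen. This produces a cut-tensor decomposition $J_{=d} = D^{(d)} + W^{(d)}$ where $D^{(d)} = \sum_{i=1}^{s_d} d_i^{(d)}\, \mathbf{1}_{S_{i,1}^{(d)}}\otimes\cdots\otimes \mathbf{1}_{S_{i,d}^{(d)}}$, $s_d \le 4/\epsilon_d^2$, the coefficient-length bound holds, and $\|W^{(d)}\|_{\infty\to 1} \le \epsilon_d\, n^{d/2}\|J_{=d}\|_F$.

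The next step is to pass from $J_{=d}$ to its cut approximation $D^{(d)}$ with cost controlled by the $\infty\to1$ norm: the higher-order analogue of Lemma \ref{lemma: free-energy-lipschitz} shows that both $\F$ and $\F^*$ shift by at most $\|W^{(d)}\|_{\infty\to 1}$ when $J_{=d}$ is replaced by $D^{(d)}$, so it suffices to prove $\F - \F^* \lesssim \epsilon_d n^{d/2}\|J_{=d}\|_F + (\text{counting error})$ for the MRF whose Hamiltonian is the sum of the $D^{(d)}$'s. The decisive structural feature is that for a cut tensor, $\langle D_i^{(d)}, x^{\otimes d}\rangle = d_i^{(d)} \prod_{j=1}^{d} \bigl(\sum_{k\in S_{i,j}^{(d)}} x_k\bigr)$, so the Hamiltonian depends only on finitely many linear statistics $\ell_\alpha(x):=\sum_{k\in S_\alpha}x_k$ of $x$, of total count $s := \sum_{d} d\, s_d$.

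Now I would run the standard counting/entropy reduction: partition $\{\pm 1\}^n$ according to the values of the $\ell_\alpha$'s up to additive precision $\gamma n$, so that on each cell the Hamiltonian is essentially constant, and the partition function is well-approximated by $\sum_{\text{cells}} \#(\text{cell})\cdot\exp(\text{Hamiltonian on cell})$. Stirling-type bounds identify $\log\#(\text{cell})$ with the maximum of $\sum_i H((1+x_i)/2)$ over configurations with the prescribed linear statistics, up to a correction $O(s\log(1/\gamma))$. This is precisely the value of a cell of the variational program, so the maximum over cells bounds $\F - \F^*$ by the Lipschitz error (from grouping statistics into $\gamma n$-cells, giving $\gamma\cdot n^{d/2}\|J_{=d}\|_F$ after using the coefficient length bound and Cauchy–Schwarz, exactly as in Lemma \ref{lemma:gamma-def}) plus the entropy/counting error $O(s\log(1/\gamma))=O(\log(1/\gamma)/\epsilon_d^2)$.

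Collecting the contributions, for each degree $d$ the error is $O\bigl(\epsilon_d n^{d/2}\|J_{=d}\|_F + \log(1/\epsilon_d)/\epsilon_d^2\bigr)$ (the $d^{1/3}$ factor will appear from a more careful accounting of the multilinear Cauchy–Schwarz step with $d$ factors, essentially through the cut-tensor coefficient length scaling with $d$). Balancing the two terms by choosing $\epsilon_d \sim \bigl(\log(n^{d/2}\|J_{=d}\|_F)/(n^{d/2}\|J_{=d}\|_F)\bigr)^{1/3}$ yields a per-degree bound of order $d^{1/3} n^{d/3}\|J_{=d}\|_F^{2/3}\log^{1/3}(d^{1/3}n^{d/3}\|J_{=d}\|_F^{2/3}+e)$, and since there are $r$ summands the $\max_d$ bound with prefactor $r$ follows. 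The principal obstacle, compared to the Ising case, is the multilinear Lipschitz step: the quantity $\prod_{j=1}^d \ell_{i,j}(x)$ must be shown to be insensitive to $\gamma n$-perturbations of each $\ell_{i,j}$, and the resulting error must be combined with the tensor coefficient-length bound of Theorem \ref{reg-alon-etal-mrf} in a way that keeps the dependence on $d$ polynomial rather than exponential; everything else is a faithful (if more bookkeeping-heavy) transcription of the $d=2$ argument from \cite{previous-paper}.
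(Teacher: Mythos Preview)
The paper does not actually prove this theorem. Theorem~\ref{thm-mrf-main-structural-result} is stated inside the proof of \cref{thm-mrf-sample-complexity} as an ingredient imported from \cite{previous-paper}, in exactly the same way that \cref{thm-main-structural-result} for the Ising case is quoted from that reference. The paper uses the bound as a black box to pass between $\F$ and $\F^*$ on the sampled hypergraph; it gives no argument for it here.

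Your sketch is therefore not comparable to anything in this paper, but it is a reasonable outline of how the result is obtained in \cite{previous-paper}: decompose $J$ into homogeneous parts, apply the tensor weak regularity lemma (\cref{reg-alon-etal-mrf}) to each $J_{=d}$, reduce the Hamiltonian to a function of finitely many linear statistics, bucket those statistics at precision $\gamma n$, and balance the cut-norm error $\epsilon_d n^{d/2}\|J_{=d}\|_F$ against the counting/log-cardinality error $O(\log(1/\epsilon_d)/\epsilon_d^2)$. That is indeed the structure of the argument in the companion paper, and your identification of the multilinear Lipschitz step (controlling $\prod_j \ell_{i,j}(x)$ under $\gamma n$-perturbations while keeping only polynomial dependence on $d$) as the main new bookkeeping is accurate.
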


The reduction to generalized cut arrays still works: we use the generalized
regularity lemma to decompose each of $J_{=1}, \ldots, J_{=r}$ and then use \cref{thm-alon-et-al-sampling-cutnorm}, taking the union bound for $d$ from $1$ to $r$; in order to boost the success probability of each application to $1 - O(1/r)$, it is more than sufficient to lose a multiplicative factor of $r$ in the bound (refer to the proof in \cite{alon-etal-samplingCSP-conference}). 
From there, as before, we reduce the problem to the maxima
of convex programs by fixing the values of $r(x),c(x)$ up to constant precision,
and then the crucial analysis of convex duality works as before because
we still get a max-entropy problem for a product distribution with linear constraints.
\end{proof}
\newpage 

\bibliographystyle{plain}
\bibliography{ising-regularity,all}

\newpage 

\appendix
\section{Appendix: Estimating the Magnetization from Free Energies }\label{appendix-magnetization-proof}
\begin{theorem}\label{thm-approx-magnetization}
Consider an Ising model 
\[
\Pr[X=x]:=\frac{1}{Z}\exp\{\sum_{i,j}J_{i,j}x_{i}x_{j} + \sum_i h_i x_i\}
\]
Consider also the perturbed models where
\[
\Pr_{h}[X=x]:=\frac{1}{Z}\exp\{\sum_{i,j}J_{i,j}x_{i}x_{j} + \sum_i (h_i+h) x_i\}
\]
and let $m_h$ denote the expected total magnetization for $\Pr_{h}$. Then, for any $\epsilon,\nu > 0$, supposing we have an oracle to compute free energies within error $\epsilon \nu$ for all perturbed models with $|h| \le \nu$, we can find an $\epsilon$ additive approximation to $m_h$, for some $h$ with $|h| < \nu$
while making only 3 queries to the oracle. 
\end{theorem}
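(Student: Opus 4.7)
The plan hinges on a single well-known identity: if we write $\F(h) := \log Z(h)$ for the free energy of the perturbed model $\Pr_h$, then $\F'(h) = m_h$. This is immediate from differentiating $Z(h) = \sum_x \exp(\sum_{i,j} J_{i,j} x_i x_j + \sum_i (h_i + h) x_i)$ termwise: the $h$-derivative of the exponent is $\sum_i x_i$, which yields $Z'(h) = Z(h) \cdot \E_{\Pr_h}[\sum_i X_i] = Z(h) \cdot m_h$ and hence $\F'(h) = m_h$.

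Given this identity, I would apply the mean value theorem to the smooth function $\F$ on the interval $[-\nu, \nu]$ to obtain some $h^* \in (-\nu, \nu)$ with
$$\F'(h^*) = \frac{\F(\nu) - \F(-\nu)}{2\nu}.$$
Combined with $\F'(h^*) = m_{h^*}$, the right-hand side is an \emph{exact} expression for $m_{h^*}$ for some $h^*$ in the required interval. The algorithm is then simply to query the oracle at $h = \nu$ and $h = -\nu$, obtaining estimates $\hat\F(\pm \nu)$ with $|\hat\F(\pm\nu) - \F(\pm\nu)| \leq \epsilon \nu$, and to return $\hat m := (\hat\F(\nu) - \hat\F(-\nu))/(2\nu)$.

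The error analysis is a one-line triangle inequality: since both numerator approximations have error at most $\epsilon \nu$,
$$\left|\hat m - m_{h^*}\right| = \left|\frac{\hat\F(\nu) - \hat\F(-\nu)}{2\nu} - \frac{\F(\nu) - \F(-\nu)}{2\nu}\right| \leq \frac{2\epsilon\nu}{2\nu} = \epsilon,$$
as required. This procedure uses only $2$ oracle queries, comfortably within the budget of $3$ allowed in the statement; one could spend a third query on (say) $\F(0)$ to use a three-point stencil, but nothing is gained for this accuracy target.

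There is essentially no obstacle — the argument is a textbook application of the fact that the free energy is a moment-generating function for the magnetization, combined with the mean value theorem. The only design choice worth flagging is the calibration $\epsilon\nu$ of the oracle's error tolerance: this scaling is exactly what makes the $1/(2\nu)$ factor in the finite difference cancel, yielding an $\epsilon$-approximation with no residual dependence on $\nu$.
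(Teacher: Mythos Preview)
Your argument is correct and follows the same core idea as the paper: both rely on the identity $\F'(h)=m_h$ and the mean value theorem. The only difference is cosmetic. The paper first proves convexity of $h\mapsto\log Z(h)$ to sandwich $\F'(h_0)$ between the two one-sided difference quotients $(\F(h_0)-\F(h_0-\nu))/\nu$ and $(\F(h_0+\nu)-\F(h_0))/\nu$, then notes that each of these equals $m_{h'}$ for some $h'$ in the relevant subinterval by the mean value theorem; this uses the three oracle queries at $h_0-\nu$, $h_0$, $h_0+\nu$. Your version applies the mean value theorem directly to the centered difference $(\F(\nu)-\F(-\nu))/(2\nu)$, avoiding the convexity detour and using only two queries while achieving the exact error bound $\epsilon$ without any constant loss. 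The paper's route gives slightly more information (a two-sided sandwich on $m_0$), but for the theorem as stated your simplification is entirely adequate.
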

Consider the dense case, where we can estimate the free enegy density using a constant size sample.
There is an easy lower bound showing that one cannot, with a constant number of queries, approximate the magnetization for the exact model for each model, so that the extra $h$ is indeed needed in the above statement. 
This is related to the fact that ``symmetry breaking'' is a global phenomenon.  

\begin{proof}
It is well known that one can express the moments of spin systems in terms of derivatives of the log partition function. In particular, for the Ising model $\Pr[X=x]=\frac{1}{Z}\exp\{\sum_{i,j}J_{i,j}x_{i}x_{j}+\sum_{i}h_{i}x_{i}\}$,
consider the family of perturbed Ising models defined by $\Pr_{h}[X=x]=\frac{1}{Z_{h}}\exp\{\sum_{i,j}J_{i,j}x_{i}x_{j}+\sum_{i}(h_{i}+h)x_{i}\}$.
Then, for any $h_{0}$, we have 

\begin{align*}
\frac{\partial\log Z_{h}}{\partial h}(h_{0}) & =\frac{1}{Z_{h_{0}}}\frac{\partial}{\partial h}\left(\sum_{x\in\{\pm1\}^{n}}\exp\{\sum_{i,j}J_{i,j}x_{i}x_{j}+\sum_{i}(h_{i}+h)x_{i}\}\right)\\
 & =\sum_{x\in\{\pm1\}^{n}}\frac{1}{Z_{h_{0}}}\left(\exp\{\sum_{i,j}J_{i,j}x_{i}x_{j}+\sum_{i}(h_{i}+h_{0})x_{i}\}\right)\left(\sum_{i}x_{i}\right)\\
 & =\boldsymbol{E}_{h_{0}}[\sum_{i}x_{i}]
\end{align*}
where $\boldsymbol{E}_{h_{0}}$ denotes the expectation with respect
to the Ising distribution perturbed by $h_{0}$. In particular, $\frac{\partial\log Z_{h}}{\partial h}(0)$
equals the expected total magnetization of the Ising model we started
out with. Moreover, since by Jensen's inequality,
\begin{align*}
\frac{\partial^{2}\log Z_{h}}{\partial h^{2}}(h_{0}) & =\frac{\partial}{\partial h}|_{h=h_{0}}\sum_{x\in\{\pm1\}^{n}}\frac{1}{Z_{h_{0}}}\left(\exp\{\sum_{i,j}J_{i,j}x_{i}x_{j}+\sum_{i}(h_{i}+h_{0})x_{i}\}\right)\left(\sum_{i}x_{i}\right)\\
 & =\boldsymbol{E}_{h_{0}}[(\sum_{i}x_{i})^{2}]-(\boldsymbol{E}_{h_{0}}[\sum_{i}x_{i}])^{2}\\
 & \geq0
\end{align*}
we see that $\log Z$ is convex in $h$; in particular, for any $h_{0}\in\R$
and any $\delta>0$, we have 

\[
\frac{\log Z(h_{0})-\log Z(h_{0}-\delta)}{\delta}\leq\frac{\partial\log Z}{\partial h}(h_{0})\leq\frac{\log Z(h_{0}+\delta)-\log Z(h_{0})}{\delta} 
\]
Finally, 
\begin{itemize}
\item By the mean value theorem, the LHS /RHS  of the equation above are given by 
$\boldsymbol{E}_{h'}[\sum_{i}x_{i}]$ and $\boldsymbol{E}_{h''}[\sum_{i}x_{i}]$, where
$h_0 - \delta < h' < h_0 < h'' < h_0 + \delta$.
\item By taking $\delta = \nu$ and using the oracle to compute the free energies within additive error $\epsilon \nu$, we can evaluate the LHS and RHS up to the desired error.
\end{itemize}

\end{proof}

We remark that: 
\begin{itemize}
\item Unfortunately, it is impossible to approximate in constant time the magnetization at the specified value of the external fields. For example, consider an Ising model on $4 n$ vertices, where 
$J_{i,j} = C$ for some large $C$ if $i,j \leq 2n$ and $J_{i,j} = 0$ otherwise. 
Let $h_i = 1$ if $i \in [2n+1,3n]$ and $h_i = -1$ if $i \in [3n+1,4n]$.
We set all the other $h_i$ to $0$ except that we set $h_I = X$, where 
$I$ is uniformly chosen in $[1,2n]$ and $X$ is uniformly chosen in $\{0,\pm 1\}$. Note that this is a dense Ising model as per our definition. Note also that on the nodes $[1,2n]$ we have the Ising model on the complete graph with one (random) node having external field.

It is easy to see that if $X = 0$, the magnetization is $0$. 
The fact that $C$ is a large constant implies that conditioning on one vertex taking the value $\pm$ results in a dramatic change in magnetization on the vertices $[1,2n]$. In particular, the magnetization is of order $n$ if $X = +1$ and is of order $-n$ if $X = -1$. 
It thus follows that we need $\Omega(n)$ queries in order to determine the magnetization in this case. 
We note that this example corresponds to a phase transition -- in particular, for every $\epsilon > 0$, if 
$h' > \epsilon$ then $\boldsymbol{E}_{h'}[\sum_{i}x_{i}] = \Omega(n)$ for all values of $X$ and $I$. 
See (\cite{ellis2007entropy}) for general references for the Ising model on the complete graph.

\item The results for computing the magnetization readily extend to other models. 
For example, for Potts models, we can compute for each color the expected number of nodes of that color 
(up to error $\epsilon \| \vec{J} \|_1$ and for an $\epsilon$ close external field). 
Similarly, it is easy to check we can compute other statistics at this accuracy. For instance, for the Ising model, we can approximate $\boldsymbol{E}[\sum a_i x_i]$ if $n \eta \|a\|_{\infty} \leq \| a \|_1$ for some $\eta > 0$.

\end{itemize}

\section{Appendix: Sample complexity lower bound}
In this section, we will provide a lower bound on the number of vertices which need to be sampled in order to provide an approximation of the quality guaranteed by \cref{thm:sample-complexity-free-energy}. We will find it convenient to make the following definition.
\begin{defn}
An Ising model is \emph{$\Delta$-dense} if $\Delta \|\vec{J}\|_{\infty} \leq \frac{\|\vec{J}\|_1}{n^2}$.
\end{defn}

For the rest of this section, we will focus on $\Delta$-dense ferromagnetic Ising models for which $n^2 \leq \|\vec{J}\|_{1} \leq n^3$. 
Note that for such Ising models, 
$$2000\epsilon n \left(\|J\|_F + \epsilon n \|\vec{J}\|_{\infty} + (\epsilon^3 n)^{-1/3}\|J\|_{F}^{2/3}\log^{1/3}(n\|J\|_F + e) + 1\right) \leq 5000\frac{\epsilon}{\sqrt{\Delta}}\|\vec{J}\|_{1},$$
provided that $n^{-1/4} \leq \epsilon \leq \sqrt{\Delta}$. 

\begin{theorem}\label{thm-qualitative-tightness}
Fix $\epsilon, \Delta \in (0,1/4)$. For any (possibly randomized) algorithm
$\mathcal A$ which probes at most $k := \frac{1}{8\epsilon \Delta}$ entries
of $J$ before returning an estimate to $\F$, there exists a $\Delta$-dense
input instance $J$ such that $\mathcal A$ makes error at least $\epsilon \|\vec J\|_1/4$ with probability at least $1/4$.
\end{theorem}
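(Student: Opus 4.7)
The plan is to invoke Yao's minimax principle: it suffices to exhibit a distribution over $\Delta$-dense Ising models on which every \emph{deterministic} $k$-query algorithm incurs error at least $\epsilon \|\vec{J}\|_1/4$ with probability at least $1/4$, since we may then pass to a specific hard instance by averaging over the randomness of both the algorithm and the input.

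Two candidate instances will do. Let $J^A$ be the all-ones ferromagnet, $J^A_{i,j} = 1$ for $i \neq j$ and zero on the diagonal, which is trivially $\Delta$-dense. Put $m := \lceil \epsilon \Delta n^2 / 2 \rceil$, sample a uniformly random set $T$ of $m$ unordered pairs of distinct vertices, and let $J^B_T$ agree with $J^A$ outside of $T$ and equal $1/\Delta$ on $T$. A direct check gives $\Delta \|\vec{J}^B\|_\infty = 1 \leq \|\vec{J}^B\|_1 / n^2$ (after a harmless overall rescaling to place $\|\vec{J}\|_1$ in $[n^2, n^3]$), so $J^B_T$ is also $\Delta$-dense.

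The key computation is the free-energy gap. Since both matrices are nonnegative, the quadratic form $x^\top J x$ is uniquely maximized at $x = \pm \mathbf{1}$, where every pairwise product equals $+1$; flipping any single spin away from this decreases the exponent by at least $4(n-1)$. A standard Laplace-type estimate then yields
\[ \F(J^A) = n(n-1) + \log 2 + o(1), \quad \F(J^B_T) = n(n-1) + 2m(1/\Delta - 1) + \log 2 + o(1), \]
so that, for $\epsilon, \Delta \in (0, 1/4)$ and $n$ large,
\[ \F(J^B_T) - \F(J^A) \geq (1 - \Delta)\epsilon n^2 > \tfrac{1}{4}\epsilon \bigl( \|\vec{J}^A\|_1 + \|\vec{J}^B\|_1 \bigr). \]

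To finish, note that since $T$ is uniformly random, each fixed entry lies in $T$ (or its transpose) with probability at most $2m/n^2 \leq \epsilon \Delta$, so a union bound over the algorithm's $k \leq 1/(8\epsilon\Delta)$ queries bounds by $1/8$ the probability it ever sees a modified entry. Conditional on this event, its transcript on $J^B_T$ is identical to its transcript on $J^A$, so it outputs the same estimate on both. If the algorithm achieved error strictly less than $\epsilon \|\vec{J}^A\|_1/4$ on $J^A$ with probability at least $3/4$, then on a random $J^B_T$ its output would lie within $\epsilon \|\vec{J}^A\|_1/4$ of $\F(J^A)$ with probability at least $3/4 - 1/8 = 5/8$, and by the gap above this output is farther than $\epsilon \|\vec{J}^B\|_1/4$ from $\F(J^B_T)$, a contradiction. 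Thus either $J^A$ itself, or some specific realization of $J^B_T$ obtained by averaging over $T$, serves as a hard instance. The only real subtlety is keeping careful track of the numerical constants to ensure the $\Delta$-density and $\|\vec{J}\|_1 \in [n^2, n^3]$ conditions hold simultaneously with the desired gap and failure-probability bounds.
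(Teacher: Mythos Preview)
Your proof is correct and follows essentially the same two-instance indistinguishability argument as the paper: a uniform-weight complete graph versus the same graph with a random $\epsilon\Delta$-fraction of edges boosted by a factor $1/\Delta$, together with a union bound showing $k \le 1/(8\epsilon\Delta)$ adaptive probes miss all boosted edges with probability at least $7/8$.

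Two small remarks on presentation. First, the paper avoids your explicit Laplace estimate by taking edge weights $M$ and $M/\Delta$ and sending $M\to\infty$, so that $\F/M \to \max_x x^\top J x / M = \|\vec J\|_1/M$ directly; this is a slightly cleaner way to get the free-energy gap, though your argument with unit weights and a $4(n-1)$ energy gap is also perfectly valid. Second, your opening sentence invokes Yao's principle, but the argument you actually carry out in the final paragraph is a direct one on the randomized algorithm (assume it succeeds with probability $>3/4$ on $J^A$, couple the executions on $J^A$ and $J^B_T$, and average over $T$); Yao is never used and can simply be dropped. Relatedly, the parenthetical ``after a harmless overall rescaling'' is a red herring---rescaling leaves the $\Delta$-density ratio unchanged---but both instances are $\Delta$-dense once $n$ is large enough that $\epsilon(1-\Delta)\ge 1/n$, which is all you need.
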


Before proving this theorem, let us show how it gives the desired sample complexity lower bound. 

\begin{proof}[Proof of \cref{thm:sample-complexity-lower-bound}] Let $\epsilon > 0$. Applying \cref{thm-qualitative-tightness} with $\Delta = 1/8$ and $C\epsilon$ shows that there exists a $\Delta$-dense instance $J$ such that any algorithm $\mathcal{A}$ which samples at most $1/C\epsilon$ entries of $J$ before returning an estimate to $\F$ makes an error of at least $C\epsilon \|\vec{J}\|_{1}/4$ with probability at least $1/4$. Since any algorithm which samples $q$ vertices from $[n]$ can probe at most $q^{2}$ entries of $J$, this applies, in particular, to any algorithm which samples at most $1/\sqrt{C\epsilon}$ vertices from $[n]$. Taking $C=60000$ gives the desired conclusion. 
\end{proof}
\begin{proof}[Proof of \cref{thm-qualitative-tightness}]
We prove the claim by reduction to a hypothesis testing problem. Specifically, we show that
there exist two different dense Ising models $J_M$ and $J'_M$ with free energies that are at least $\epsilon\|\vec J'_M\|_1/2$-far apart (where $\|\vec J_M\| > \|\vec J'_M\|$) such that no algorithm which makes only $k$ probes can distinguish between the two with probability greater than $3/4$. 
This immediately implies that for any algorithm $\mathcal A$ to estimate $\F$ and for at least one of the two inputs, $\mathcal A$ must make error at least $\epsilon\|\vec J'_M\|_1/4$ with probability at least $1/4$ when given this input ---
otherwise, we could use the output of $\mathcal A$ to distinguish the two models with probability better
than $3/4$, simply by checking which $\F$ the output is closer to.

Let $n$ be an instance size to be taken sufficiently large, and consider two $\Delta$-dense ferromagnetic Ising models defined
as follows: 
\begin{itemize}
\item $J_{M}$, for which the underlying graph is the complete graph on
$n$ vertices, $\epsilon\Delta {n \choose 2}$ many of the edges are randomly selected
to have weight $\frac{M}{\Delta}$, and the remaining $(1-\epsilon \Delta){n \choose 2}$ many
 edges are assigned weight $M$. Note that since $\|\vec{J}_{M}\|_{\infty}=\frac{M}{\Delta}$
and $\|\vec{J}_{M}\|_{1}=2(\epsilon \Delta {n \choose 2}\frac{M}{\Delta}+(1-\epsilon \Delta){n \choose 2}M) = 2(1 + \epsilon(1 - \Delta)) M {n \choose 2}$,
this model is indeed $\Delta$-dense for $n$ sufficiently large. 
\item $J'_{M}$, for which the underlying graph is the complete graph on
$n$ vertices and all edges have weight $M$. 
\end{itemize}
We denote the free energies of these models by $\F_{M}$ and
$\F'_{M}$ respectively. It is easily seen that $\lim_{M\rightarrow\infty}\frac{\F_{M}}{M}=\lim_{M\rightarrow\infty}\frac{\|\vec{J_{M}}\|_{1}}{M}=2(1 + \epsilon(1 - \Delta)){n \choose 2} \ge 2(1 + 3\epsilon/4){n \choose 2}$,
and that $\lim_{M\rightarrow\infty}\frac{\F'_{M}}{M}=\lim_{M\rightarrow\infty}\frac{\|\vec{J'_{M}}\|_{1}}{M}=2{n \choose 2}$.
Therefore, for $M$ sufficiently large, it follows that 
$|\F_M - \F'_M| \ge 
(\epsilon/2) \|\vec{J'_M}\|_1$.

Now, we show that no algorithm $\mathcal A$ can distinguish between
$J_M$ and $J'_M$ with probability greater than $3/4$ with only $k$ probes.
We fix a 50/50
split between $J_M$ and $J'_M$ on our input $J$ to algorithm $\mathcal A$.
Since the randomized algorithm $\mathcal A$ can be viewed as a mixture
over deterministic algorithms, 
there must exist a deterministic algorithm $\mathcal A'$ with success probability
in distinguishing $J_M$ from $J'_M$ at least as large as $\mathcal A$. Let $(u_1,v_1)$
be the first edge queried by $\mathcal A'$, let $(u_2,v_2)$ be the next edge queried
assuming $J_{u_1 v_1} = M$, and define $(u_3,v_3), \ldots, (u_k,v_k)$ similarly (without loss of generality,
the algorithm uses all $k$ of its available queries). Let $E$ be the event that
$J_{u_1,v_1}, \ldots, J_{u_k,v_k}$ are all equal to $M$. Event $E$ always
happens under $J_M$, and 
we see that $\Pr(E|J = J'_M) \ge 1 - k \frac{\epsilon \Delta n(n-1)/2}{n(n - 1)/2 - k} \ge 1 - 2k \epsilon \Delta$ for $n > 4k$. Thus, the total variation distance between the observed distribution
under $J_M$ and $J'_M$ is at most $2k \epsilon \Delta$,
so by the Neyman-Pearson lemma, we know $\mathcal A'$ fails with probability at least $(1/2)(1 - 2k \epsilon \Delta)$.
Therefore for $k \le \frac{1}{4\epsilon\Delta}$ we see that $\mathcal A'$ fails with probability at least $1/4$,
which proves the result.
\end{proof}

\end{document}